\def\eqref#1{equation~\ref{#1}}
\def\1{\bm{1}}
\def\vx{{\bm{x}}}
\def\vy{{\bm{y}}}
\def\vz{{\bm{z}}}
\DeclareMathAlphabet{\mathsfit}{\encodingdefault}{\sfdefault}{m}{sl}
\SetMathAlphabet{\mathsfit}{bold}{\encodingdefault}{\sfdefault}{bx}{n}
\theoremstyle{definition}
\newtheorem{definition}{Definition}[section] %
\newtheorem{proposition}{Proposition}[section]
\title{Diversity-sensitive Conditional \\ Generative Adversarial Networks}
\author{Dingdong Yang\thanks{Equal contribution}\textsuperscript{~~~\textdagger}, Seunghoon Hong\footnotemark[1]\textsuperscript{~~~\textdagger}, Yunseok Jang\textsuperscript{\textdagger}, Tianchen Zhao\textsuperscript{\textdagger}, Honglak Lee\textsuperscript{\textdagger,$\ddagger$}\\
\textsuperscript{\textdagger} University of Michigan, Ann Arbor, MI, USA\\
\textsuperscript{$\ddagger$} Google Brain, Mountain View, CA, USA\\
}
\newcommand{\nn}{\nonumber}
\newcommand{\sh}[1]{{\color{blue}{[SH: #1]}}}
\newcommand{\eric}[1]{{\color{red}{[ERIC: #1]}}}
\newcommand{\dingdong}[1]{{\color{cyan}{[Dingdong: #1]}}}
\newcommand{\todo}[1]{{\color{red}{[TODO: #1]}}}
\newcommand{\cutsectionup}{\vspace*{-0.15in}}
\newcommand{\cutsectiondown}{\vspace*{-0.12in}}
\newcommand{\cutsubsectionup}{\vspace*{-0.1in}}
\newcommand{\cutsubsectiondown}{\vspace*{-0.07in}}
\newcommand{\cutparagraphup}{\vspace*{-0.1in}}
\newcommand{\eg}{\emph{e.g.}}
\newcommand{\ie}{\emph{i.e.}}
\newcommand{\etc}{\emph{etc.}}
\newcommand{\ours}{DSGAN}
\begin{document}

\maketitle

\begin{abstract}
We propose a simple yet highly effective method that addresses the mode-collapse problem in the Conditional Generative  Adversarial  Network (cGAN). 
Although conditional distributions are multi-modal (i.e., having many modes) in practice, most cGAN approaches tend to learn an overly simplified distribution where an input is always mapped to a single output regardless of variations in latent code. 
To address such issue, we propose to explicitly regularize the generator to produce diverse outputs depending on latent codes. 
The proposed regularization is simple, general, and can be easily integrated into most conditional GAN objectives. 
Additionally, explicit regularization on generator allows our method to control a balance between visual quality and diversity.  
We demonstrate the effectiveness of our method on three conditional generation tasks: image-to-image translation, image inpainting, and future video prediction. 
We show that simple addition of our regularization to existing models leads to surprisingly diverse generations, substantially outperforming the previous approaches for multi-modal conditional generation specifically designed in each individual task.
\end{abstract}

\cutsectionup
\section{Introduction}
\cutsectiondown
\label{sec:intro}

The objective of conditional generative models is learning a mapping function from input to output distributions.
Since many conditional distributions are inherently ambiguous (\eg~predicting the future of a video from past observations), the ideal generative model should be able to learn a multi-modal mapping from inputs to outputs.
Recently, Conditional Generative Adversarial Networks (cGAN) have been successfully applied to a wide-range of conditional generation tasks, such as image-to-image translation~\citep{pix2pix,pix2pixHD,cycleGAN}, image inpainting~\citep{contextEncoder,localglobal}, text-to-image synthesis~\citep{StackGAN,hong2018inferring}, video generation~\citep{mcnet}, \etc.
In conditional GAN, the generator learns a deterministic mapping from input to output distributions, where the multi-modal nature of the mapping is handled by sampling random latent codes from a prior distribution.

However, it has been widely observed that conditional GANs are often suffered from the mode collapse problem~\citep{improvedGANs,theorygan}, where only small subsets of output distribution are represented by the generator.
The problem is especially prevalent for high-dimensional input and output, such as images and videos, since the model is likely to observe only one example of input and output pair during training. 
To resolve such issue, there has been recent attempts to learn multi-modal mapping in conditional generative models~\citep{bicycleGAN, MUNIT}.
However, they are focused on specific conditional generation tasks (\eg~image-to-image translation) and require specific network architectures and objective functions that sometimes are not easy to incorporate into the existing conditional GANs. 

In this work, we introduce a simple method to regularize the generator in conditional GAN to resolve the mode-collapse problem.
Our method is motivated from an observation that the mode-collapse happens when the generator maps a large portion of latent codes to similar outputs.
To avoid this, we propose to encourage the generator to produce different outputs depending on the latent code, so as to learn a one-to-one mapping from the latent codes to outputs instead of many-to-one. 
Despite the simplicity, we show that the proposed method is widely applicable to various cGAN architectures and tasks, and outperforms more complicated methods proposed to achieve multi-modal conditional generation for specific tasks. 
Additionally, we show that we can control a balance between visual quality and diversity of generator outputs with the proposed formulation.
We demonstrate the effectiveness of the proposed method in three representative conditional generation tasks, where most existing cGAN approaches produces deterministic outputs: \textit{Image-to-image translation, image inpainting} and \textit{video prediction}. 
We show that simple addition of the proposed regularization to the existing cGAN models effectively induces stochasticity from the generator outputs.

\cutsectionup
\section{Related Work}
\cutsectiondown
\label{sec:related_work}
Resolving the mode-collapse problem in GAN is an important research problem, and has been extensively studied in the standard GAN settings~\citep{unrolledGANs,WGAN,WGAN_GP, improvedGANs, specNorm}.
These approaches include unrolling the generator gradient update steps~\citep{unrolledGANs}, incorporating the minibatch statistics into the discriminator~\citep{improvedGANs}, employing the improved divergence measure to smooth the loss landscape of the discriminator~\citep{WGAN_GP,WGAN,specNorm}, \etc.
Although these approaches have been successful in modeling unconditional data distribution to some extent, recent studies have reported that it is still not sufficient to resolve a mode-collapse problem in many conditional generative tasks, especially for high-dimensional input and output.

Recently, some approaches have been proposed to address the mode-collapse issue in conditional GAN.
\citet{bicycleGAN} proposed a hybrid model of conditional GAN and Variational Autoencoder (VAE) for multi-modal image-to-image translation task.
The main idea is designing the generator to be invertible by employing an additional encoder network that predicts the latent code from the generated image. 
The similar idea has been applied to unsupervised image-to-image translation~\citep{MUNIT} and stochastic video generation~\citep{savp} but with non-trivial task-specific modifications.
However, these approaches are designed to achieve multi-modal generation in each specific task, and there has been no unified solution that addresses the mode-collapse problem for general conditional GANs.

\cutsubsectionup
\section{Method}
\cutsubsectiondown
\label{sec:method}
Consider a problem of learning a conditional mapping function $G:\mathcal{X}\to\mathcal{Y}$, which generates an output $\mathbf{y}\in\mathcal{Y}$
conditioned on the input $\vx\in\mathcal{X}$. 
Our goal is to learn a multi-modal mapping $G:\mathcal{X}\times\mathcal{Z}\to\mathcal{Y}$, such that an input $\vx$ can be mapped to multiple and diverse outputs in $\mathcal{Y}$ depending on the latent factors encoded in $\vz\in\mathcal{Z}$. 
To learn such multi-modal mapping $G$, we consider a conditional Generative Adversarial Network (cGAN), which learns both conditional generator $G$ and discriminator $D$ by optimizing the following adversarial objective:
\begin{equation}
    \min_G\max_D ~ \mathcal{L}_{cGAN}(G,D) = \mathbb{E}_{\vx,\vy}[\log D(\vx,\vy)] + \mathbb{E}_{\vx,\vz}[\log (1-D(\vx,G(\vx,\vz)))]. 
    \label{eqn:obj_cGAN}
\end{equation}
Although conditional GAN has been proved to work well for many conditional generation tasks, it has been also reported that optimization of Eq.~(\ref{eqn:obj_cGAN}) often suffers from the mode-collapse problem, which in extreme cases leads the generator to learn a deterministic mapping from $\vx$ to $\vy$ and ignore any stochasticity induced by $\vz$.
To address such issue, previous approaches encouraged the generator to learn an invertible mapping from latent code to output by $E(G(\vx,\vz))=\vz$~\citep{bicycleGAN,MUNIT}.
However, incorporating an extra encoding network $E$ into the existing conditional GANs requires non-trivial modification of network architecture and introduce the new training challenges, which limits its applicability to various models and tasks.

We introduce a simple yet effective regularization on the generator that directly penalizes its mode-collapsing behavior.
Specifically, we add the following \emph{maximization} objective to the generator: 
\begin{equation}
\max_G ~ \mathcal{L}_{\vz}(G) = \mathbb{E}_{\vz_1, \vz_2} \left[ \min\left( \frac{\left \lVert G(\vx,\vz_1) - G(\vx,\vz_2)\right \lVert}{\left \lVert \vz_1-\vz_2\right \lVert}, \tau \right) \right], 
\label{eqn:latent_regularization}
\end{equation}
where $\lVert\cdot\rVert$ indicates a norm and $\tau$ is a bound for ensuring numerical stability.
The intuition behind the proposed regularization is very simple: when the generator collapses into a single mode and produces deterministic outputs based only on the conditioning variable $\vx$, Eq.~(\ref{eqn:latent_regularization}) approaches its minimum since $G(\vx, \vz_1)\approx G(\vx, \vz_2)$ for all $\vz_1,\vz_2\sim N(\mathbf{0},\mathbf{1})$.
By regularizing generator to maximize Eq.~(\ref{eqn:latent_regularization}), we force the generator to produce diverse outputs depending on latent code $\vz$. 

Our full objective function can be written as:
\begin{equation}
    \min_G \max_D ~ \mathcal{L}_{cGAN}(G,D)- \lambda \mathcal{L}_{\vz} (G),
    \label{eqn:obj_reg_approx}
\end{equation}
where $\lambda$ controls an importance of the regularization, thus, the degree of stochasticity in $G$.
If $G$ has bounded outputs through a non-linear output function (\eg~sigmoid), we remove the margin from Eq.~(\ref{eqn:latent_regularization}) in practice and control its importance only with $\lambda$. 
In this case, adding our regularization introduces only one additional hyper-parameter.

The proposed regularization is simple, general, and can be easily integrated into most existing conditional GAN objectives.
In the experiment, we show that our method can be applied to various models under different objective functions, network architectures, and tasks.
In addition, our regularization allows an explicit control over a degree of diversity via hyper-parameter $\lambda$. 
We show that different types of diversity emerge with different $\lambda$. 
Finally, the proposed regularization can be extended to incorporate different distance metrics to measure the diversity of samples. 
We show this extension using distance in feature space and for sequence data.

\cutsubsectionup
\section{Analysis of the proposed regularization}
\cutsubsectiondown
We provide more detailed analysis of the proposed method and its connection to existing approaches. 

\paragraph{Connection to Generator Gradient.}
\cutparagraphup
We show in Appendix~\ref{appx:gradient_bound} that the proposed regularization in Eq.~(\ref{eqn:latent_regularization}) corresponds to a lower-bound of averaged gradient norm of $G$ over $[\vz_1,\vz_2]$ as:
\begin{equation}
\label{ineq:relate_grad}
\mathbb{E}_{\vz_1,\vz_2}\left[\frac{\lVert G(\vx,\vz_2) - G(\vx,\vz_1) \rVert}{\lVert\vz_2 - \vz_1 \rVert}\right] \leq \mathbb{E}_{\vz_1,\vz_2}\left[ \int_0^1\lVert\nabla_{\vz} G(\vx,\bm{\gamma}(t))\rVert dt\right]
\end{equation}
where $\bm{\gamma}(t)=t\vz_2 + (1-t)\vz_1$ is a straight line connecting $\vz_1$ and $\vz_2$. %
It implies that optimizing our regularization (LHS of Eq.~(\ref{ineq:relate_grad})) will increase the gradient norm of the generator $\lVert \nabla_\vz G\rVert$.

It has been known that the GAN suffers from a gradient vanishing issue~\citep{theorygan} since the gradient of optimal discriminator vanishes almost everywhere $\nabla D \approx0$ except near the true data points.  
To avoid this issue, many previous works had been dedicated to smoothing out the loss landscape of $D$ so as to relax the vanishing gradient problem~\citep{WGAN, specNorm, WGAN_GP, GANlandscape}. %
Instead of smoothing $\nabla D$ by regularizing discriminator, we increase $\lVert\nabla_{\vz}G\rVert$ to encourage $G(\vx, \vz)$ to be more spread over the output space from the fixed $\vz_j\sim p(\vz)$, so as to capture more meaningful gradient from $D$.

\cutparagraphup
\paragraph{Optimization Perspective.}
We provide another perspective to understand how the proposed method addresses the mode-collapse problem. 
For notational simplicity, here we omit the conditioning variable from the generator and focus on a mapping of latent code to output $G_\theta: \mathcal{Z} \to \mathcal{Y}$. 

Let a mode $\mathcal{M}$ denotes a set of data points in an output space $\mathcal{Y}$, where all elements of the mode have very small differences that are perceptually indistinguishable.
We consider that the mode-collapse happens if the generator maps a large portion of latent codes to the mode $\mathcal{M}$.

Under this definition, we are interested in a situation where the generator output $G_\theta(\vz_1)$ for a certain latent code $\vz_1$ moves closer to a mode $\mathcal{M}$ by a distance of $\epsilon$ via a single gradient update.
Then we show in Appendix~\ref{apdx:groupcollapse} that such gradient update at $\vz_1$ will \emph{also move} the generator outputs of neighbors in a neighborhood $\mathcal{N}_r(\vz_1)$ to the same mode $\mathcal{M}$. In addition, the size of neighborhood $\mathcal{N}_r(\vz_1)$ can be arbitrarily large but is bounded by an open ball of a radius $r=\epsilon\cdot \left(4\inf_{\vz} \left\{ \max\left(\frac{\lVert G_{\theta_t}(\vz_1) - G_{\theta_t}(\vz)\rVert}{\lVert \vz_1-\vz \rVert},\frac{\lVert G_{\theta_{t+1}}(\vz_1) - G_{\theta_{t+1}}(\vz)\rVert}{\lVert \vz_1-\vz \rVert}\right) \right\}\right)^{-1}$, where $\theta_t$ and $\theta_{t+1}$ denote the generator parameters before and after the gradient update, respectively.

Without any constraints on $\frac{\lVert G(\vz_1)-G(\vz_2)\rVert}{\lVert \vz_1-\vz_2 \rVert}$, a single gradient update can cause the generator outputs for a large amount of latent codes to be collapsed into a mode $\mathcal{M}$.
We propose to shrink the size of such neighborhood by constraining $\frac{\lVert G(\vz_1)-G(\vz_2)\rVert}{\lVert \vz_1-\vz_2 \rVert}$ above some threshold $\tau>0$, therefore prevent the generator placing a large probability mass around a mode $\mathcal{M}$.

\cutparagraphup
\paragraph{Connection with BicycleGAN~\citep{bicycleGAN}.}
We establish an interesting connection of our regularization with \citet{bicycleGAN}.
Recall that the objective of BicycleGAN is encouraging an invertibility of a generator by minimizing $\lVert \vz -E(G(\vz))\rVert_1$.
By taking derivative with respect to $\vz$, it implies that optimal $E$ will satisfy $I=\nabla_G E(G(\vz))\nabla_\vz G(\vz)$.
Because an ideal encoder $E$ should be robust against spurious perturbations from inputs, we can naturally assume that the gradient norm of $E$ should not be very large. 
Therefore, to maintain invertibility, we expect the gradient of $G$ should not be zero, i.e. $\lVert \nabla_{\vz}G(\vz)\rVert >\tau$ for some $\tau>0$, which prevents a gradient of the generator being vanishing. 
It is related to our idea that penalizes a vanishing gradient of the generator. 
Contrary to BicycleGAN, however, our method explicitly optimizes a generator gradient to have a reasonably high norm.
It also allows us to \emph{control} a degree of diversity with a hyper-parameter $\lambda$.

\cutsubsectionup
\section{Experiments}
\cutsubsectiondown
\label{exepriment}
In this section, we demonstrate the effectiveness of the proposed regularization in three representative conditional generation tasks that most existing methods suffer from mode-collapse: \textit{image-to-image translation, image inpainting} and \textit{future frame prediction}.
In each task, we choose an appropriate cGAN baseline from the previous literature, which produces realistic but deterministic outputs, and apply our method by simply adding our regularization to their objective function.
We denote our method as \ours~({\bf D}iversity-{\bf S}ensitive {\bf GAN}).
Note that both cGAN and \ours~use the exactly the same networks.
Throughout the experiments, we use the following objective:
\begin{equation}
    \min_G \max_D ~ \mathcal{L}_{cGAN}(G,D) + \beta \mathcal{L}_{rec}(G) - \lambda \mathcal{L}_{\vz} (G),
    \label{eqn:practical_obj}
\end{equation}
where $\mathcal{L}_{rec}(G)$ is a regression (or reconstruction) loss to ensure similarity between a prediction $\hat{\vy}$ and ground-truth $\vy$, which is chosen differently by each baseline method. %
Unless otherwise stated, we use $l_1$ distance for $\mathcal{L}_{rec}(G)=\lVert G(\vx,\vz)-\vy \rVert$ and $l_1$ norm for $\mathcal{L}_\vz(G)$\footnote{We use $l_1$ instead of $l_2$ norm for consistency with reconstruction loss~\citep{pix2pix}. We also tested with $l_2$ and observe minor differences.}. 
We provide additional video results at anonymous website: \href{https://sites.google.com/view/iclr19-dsgan/}{\color{magenta}https://sites.google.com/view/iclr19-dsgan/}.

\cutsubsectionup
\subsection{Image-to-Image Translation}
\cutsubsectiondown
\label{sec:pix2pix}
In this section, we consider a task of image-to-image translation.
Given a set of training data $(\vx,\vy)\in(\mathcal{X},\mathcal{Y})$, the objective of the task is learning a mapping $G$ that transforms an image in domain $\mathcal{X}$ to another image in domain $\mathcal{Y}$ (\eg~sketch to photo image).

As a baseline cGAN model, we employ the generator and discriminator architectures from BicycleGAN~\citep{bicycleGAN} for a fair comparison.
We evaluate the results on three datasets: \emph{label$\to$image}~\citep{facades}, \emph{edge$\to$photo}~\citep{GVM, shoes}, \emph{map$\to$image}~\citep{pix2pix}.
For evaluation, we measure both the quality and the diversity of generation using two metrics from the previous literature.
We employed Learned Perceptual Image Path Similarity (LPIPS)~\citep{LPIPS} to measure the diversity of samples, which computes the distance between generated samples using features extracted from the pretrained CNN.
Higher LPIPS score indicates more perceptual differences in generated images.
In addition, we use Fr\'echet Inception Distance (FID)~\citep{FID} to measure the distance between training and generated distributions using the features extracted by the inception network~\citep{inceptionNet}. 
The lower FID indicates that the two distributions are more similar.
To measure realism of the generated images, we also present human evaluation results using Amazon Mechanical Turk (AMT).
Detailed evaluation protocols are described in Appendix~\ref{apx:pix2pix}.

\begin{figure}[th!]
    \centering
    \begin{subfigure}[t]{0.30\textwidth}
        \centering
        \includegraphics[width=1.\textwidth]{./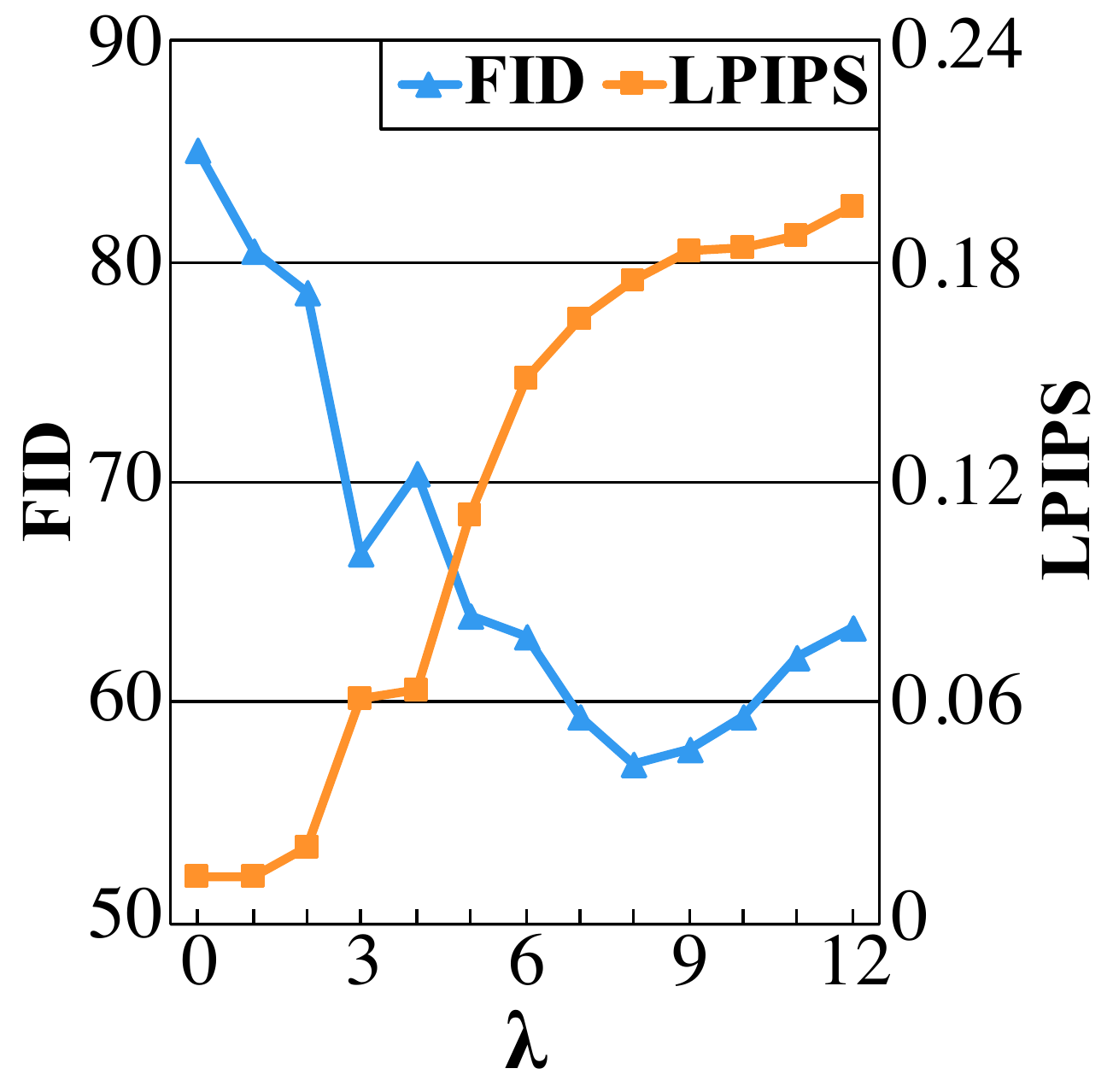}
        \caption{LPIPS and FID scores %
        }
        \label{subfig:ablation_quant}
    \end{subfigure}%
    ~~
    \begin{subfigure}[t]{0.65\textwidth}
        \centering
        \includegraphics[width=1.\textwidth]{./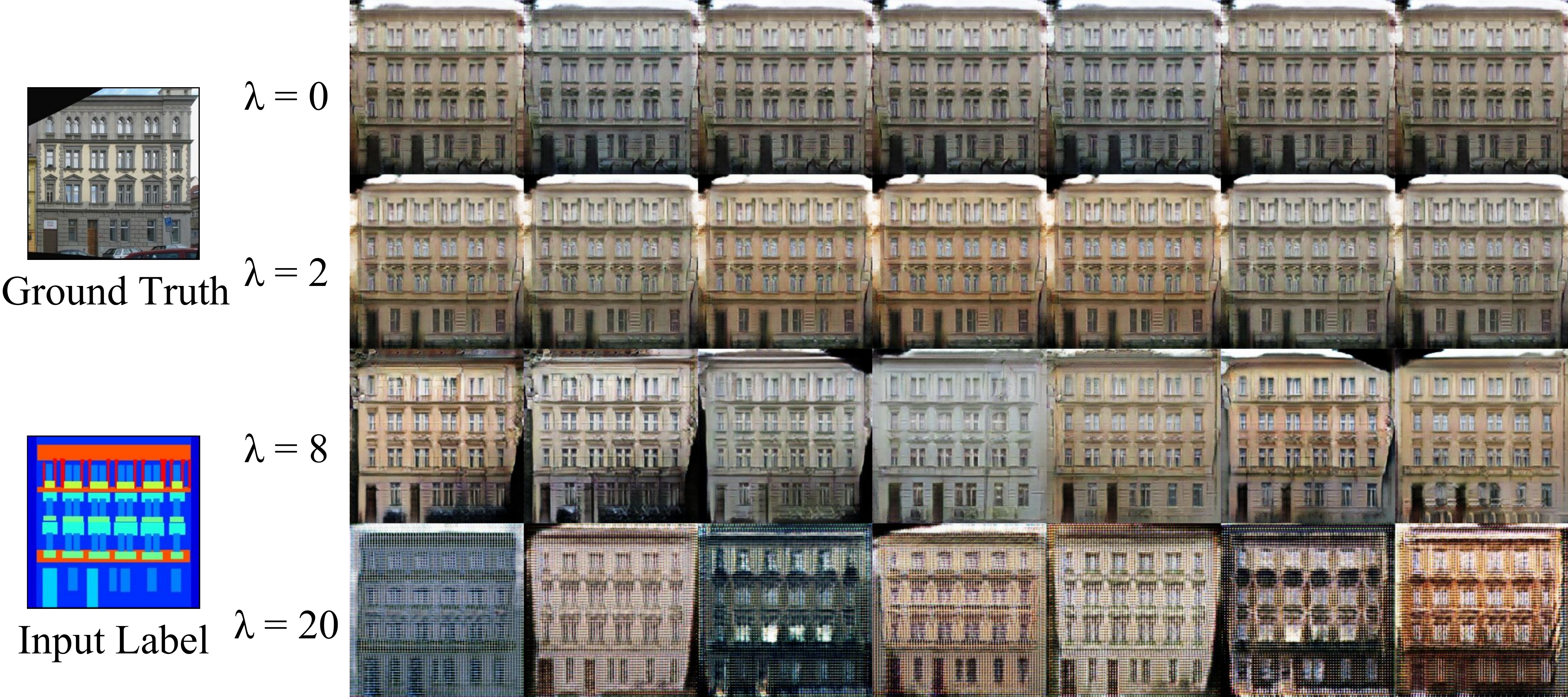}
        \caption{Qualitative generation results with randomly sampled images %
        }
        \label{subfig:ablation_qual}
    \end{subfigure}    
    \vspace{-0.1in}
    \caption{Impact of our regularization on multi-modal conditional generation.}
    \label{fig:ablation_lambda}
\end{figure}

\paragraph{Impact of the Proposed Regularization.}
\cutparagraphup
To analyze the impact of our regularization on learning a multi-modal mapping, we first conduct an ablation study by varying the weights ($\lambda$) for our regularization. 
We choose \emph{label$\to$image} dataset for this experiment, and summarize the results in Figure~\ref{fig:ablation_lambda}. 
From the figure, it is clearly observed that the baseline cGAN ($\lambda=0$) experiences a severe mode-collapse and produces deterministic outputs.
By adding our regularization ($\lambda>0$), we observe that the diversity emerges from the generator outputs.
Increasing the $\lambda$ increases LPIPS scores and lower the FID, which means that the generator learns a more diverse mapping from input to output, and the generated distribution is getting closer to the actual distribution. 
If we impose too strong constraints on diversity with high $\lambda$, the diversity keeps increasing, but generator outputs become less realistic and deviate from the actual distribution as shown in high FID (\ie~we got FID$=191$ and LPIPS=$0.20$ for $\lambda=20$). 
It shows that there is a natural trade-off between realism and diversity, and our method can control a balance between them by controlling $\lambda$.

\paragraph{Comparison with BicycleGAN~\citep{bicycleGAN}.}
\cutparagraphup
Next, we conduct comparison experiments with BicycleGAN~\citep{bicycleGAN}, which is proposed to achieve multi-modal conditional generation in image-to-image translation.
In this experiment, we fix $\lambda=8$ for our method across all datasets and compare it against BicycleGAN with its optimal settings.
Table~\ref{tab:pix2pix_comp_bicycleGAN_datasets} summarizes the results.
Compared to the cGAN baseline, both our method and BicycleGAN are effective to learn multi-modal output distributions as shwon in higher LPIPS scores. 
Compared to BicycleGAN, our method still generates much diverse outputs and distributions that are generally more closer to actual ones as shown in lower FID score. 
In human evaluation on perceptual realism, we found that there is no clear winning method over others.
It indicates that outputs from all three methods are in similar visual quality.
Note that applying BicycleGAN to baseline cGAN requires non-trivial modifications in network architecture and obejctive function, while the proposed regularization can be simply integrated into the objective function without any modifications.
Figure~\ref{fig:ours_pix2pix} illustrates generation results by our method. 
See Appendix~\ref{subsub:qualitative_results} for qualitative comparisons to BicycleGAN and cGAN.

We also conducted an experiment by varying a length of latent code $\vz$.
Table~\ref{tab:pix2pix_comp_bicycleGAN_zdim} summarizes the results.
As discussed in \citet{bicycleGAN}, generation quality of BicycleGAN degrades with high-dimensional $\vz$ due to the difficulties in matching the encoder distribution $E(\vx)$ with prior distribution $p(\vz)$.
Compared to BicycleGAN, our method is less suffered from such issue by sampling $\vz$ from the prior distribution, thus exhibits consistent performance over various latent code sizes.

\begin{table}[!th]
\centering
\small
\begin{tabular}{l|*2c|*2c|*2c}
\toprule
& \multicolumn{2}{|c}{\textit{label$\to$image}} & \multicolumn{2}{|c|}{\textit{map$\to$image}} & \multicolumn{2}{c}{\textit{edge$\to$photo}}  \\
Method & FID & LPIPS & FID  & LPIPS & FID & LPIPS \\
\hline
cGAN & 85.07 & 0.01 & 90.08 & 0.02 & 31.80 & 0.02 \\
BicycleGAN & 62.95 & 0.15 & 55.53 & 0.11 &{\bf 20.27} & 0.11 \\
\ours & {\bf 57.20} & {\bf 0.18} & {\bf 49.92} & {\bf 0.13} & 23.06 & {\bf 0.12} \\
\bottomrule
\end{tabular}
\vspace{-0.1in}
\caption{Comparisons of cGAN baseline, BicycleGAN and \ours~(ours).}
\vspace{-0.14in}
\label{tab:pix2pix_comp_bicycleGAN_datasets}
\end{table}
\begin{figure}[!th]
    \centering
    \includegraphics[width=.993\textwidth]{./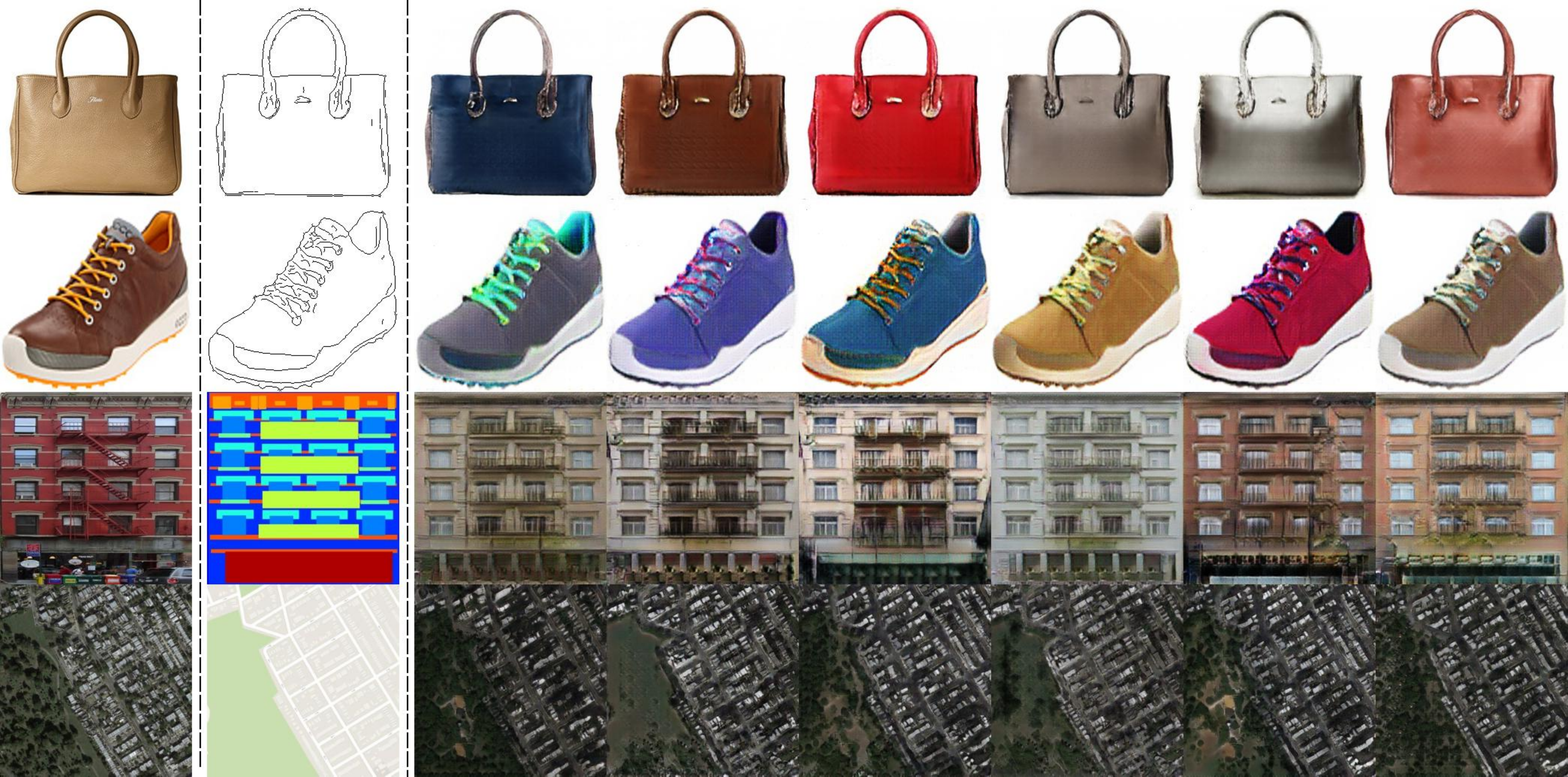}
    \vspace{-0.1in}
    \caption{Diverse outputs generated by DSGAN. 
    The first and second column shows ground-truth and input images, while the rest columns are generated images with different latent codes. 
    }
    \label{fig:ours_pix2pix}
    \vspace{-0.1in}
\end{figure}

\begin{table}[!t]
\centering
\small
\begin{tabular}{l|*2c|*2c|*2c|*2c}
\toprule
& \multicolumn{2}{c|}{$|z|=8$} & \multicolumn{2}{c|}{$|z|=32$} & \multicolumn{2}{c|}{$|z|=64$} & \multicolumn{2}{c}{$|z|=256$}\\
 & FID & LPIPS & FID & LPIPS & FID & LPIPS & FID & LPIPS \\
\hline
BicycleGAN & 62.95 & 0.15 & 79.31 & 0.16 & 94.47 & 0.17 & 111.45 & 0.17 \\
\ours & {\bf 57.20} & {\bf 0.18} & {\bf 58.34} & {\bf 0.18} & {\bf 59.81} & {\bf 0.18} & {\bf 60.79} & {\bf 0.18} \\
\bottomrule
\end{tabular}
\vspace{-0.1in}
\caption{
Comparisons of BicycleGAN and \ours~(ours) using various lengths of latent code.
}
\label{tab:pix2pix_comp_bicycleGAN_zdim}
\vspace{-0.1in}
\end{table}

\cutparagraphup
\paragraph{Extension to High-Resolution Image Synthesis.}
The proposed regularization is agnostic to the choice of network architecture and loss, therefore can be easily applicable to various methods.
To demonstrate this idea, we apply our regularization to the network of pix2pixHD~\citep{pix2pixHD}, which synthesizes a photo-realistic image of $1024\times 512$ resolution from a segmentation label.
In addition to the network architectures, \citet{pix2pixHD} incorporates a feature matching loss based on the discriminator as a reconstruction loss in Eq.~(\ref{eqn:practical_obj}).
Therefore, this experiment also demonstrates that our regularization is compatible with other choices of $\mathcal{L}_{rec}$.

Table~\ref{tab:pix2pixHD_comp_cityscape} shows the comparison results on Cityscape dataset~\citep{cityscapes}.
In addition to FID and LPIPS scores, we compute the segmentation accuracy to measure the visual quality of the generated images.
We compare the pixel-wise accuracy between input segmentation label and the predicted one from the generated image using DeepLab V3.~\citep{deeplabv3plus2018}.
Since applying BicycleGAN to this baseline requires non-trivial modifications, we compared against the original BicycleGAN.
As shown in the table, applying our method to the baseline effectively increases the output diversity with a cost of slight degradation in quality. 
Compared to BicycleGAN, our method generates much more visually plausible images.
Figure~\ref{fig:ours_pix2pixHD} illustrates the qualitative comparison.

\begin{table}[!t] %
\centering
\small
\begin{tabular}{lccc}
\toprule
& FID & LPIPS & Segmentation acc~(\%) \\
\midrule
cGAN (pix2pixHD) & 48.85 & 0.00 & {\bf 0.93} \\
BicycleGAN & 89.42 & {\bf 0.16} & 0.72\\
\ours~(pix2pixHD) & {\bf 28.80} & 0.12 & 0.92\\
\bottomrule
\end{tabular}
\vspace{-0.1in}
\caption{Comparisons of high-resolution image synthesis results in Cityscape dataset.
}
\label{tab:pix2pixHD_comp_cityscape}
\vspace{-0.1in}
\end{table}
\begin{figure}[!t]
    \centering
    \includegraphics[width=.99\textwidth]{./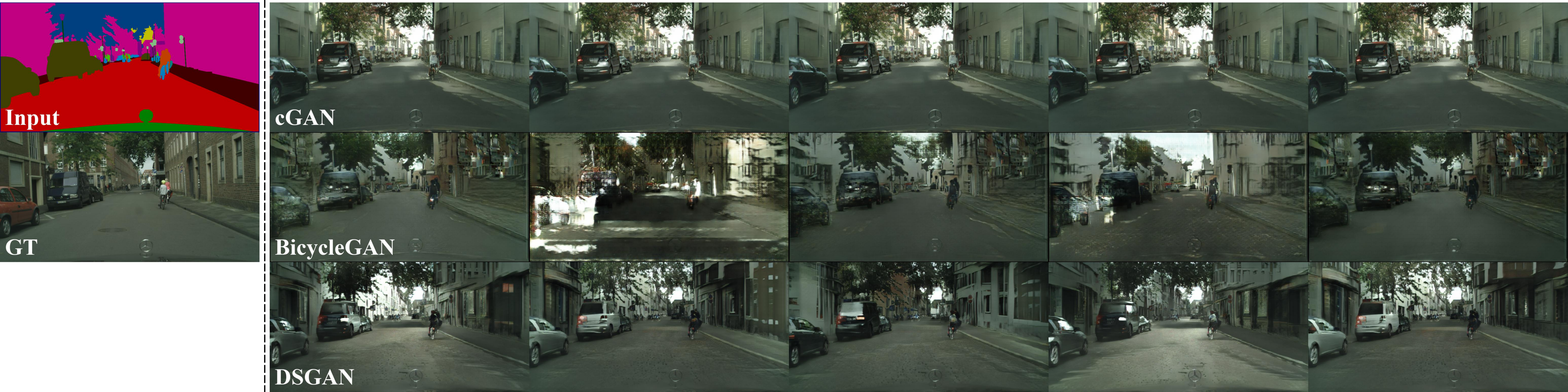}
    \vspace{-0.1in}
    \caption{Qualitative comparison for high-resolution image synthesis ($1024\times512$ \texttt{px}.). 
    }
    \label{fig:ours_pix2pixHD}
    \vspace{-0.1in}
\end{figure}

\cutsubsectionup
\subsection{Image Inpainting}
\cutsubsectiondown
\label{sec:inpainting}
In this section, we demonstrate an application of our regularization to image inpainting task.
The objective of this task is learning a generator $G:\mathcal{X}\to\mathcal{Y}$ that takes an image with missing regions  $\vx\in\mathcal{X}$ and generates a complete image $\vy\in\mathcal{Y}$ by inferring the missing regions.

For this task, we employ generator and discriminator networks from \citet{localglobal} as a baseline cGAN model with minor modification (See Appendix for more details).
To create a data for inpainting, we take $256\times256$ images of centered faces from the celebA dataset~\citep{celeba} and remove center pixels of size $128\times128$ which contains most parts of the face.
Similar to the image-to-image task, we employ FID and LPIPS to measure the generation performance.
Please refer Appendix~\ref{apx:inpainting} for more details about the network architecture and implementation details.

In this experiment, we also test with an extension of our regularization using a different sample distance metric.
Instead of computing sample distances directly from the generator output as in Eq.~(\ref{eqn:latent_regularization}), we use the \emph{encoder} features that capture more semantically meaningful distance between samples.
Similar to feature matching loss~\citep{pix2pixHD}, we use the features from a discriminator to compute our regularization as follow:
\begin{equation}
\mathcal{L}_{\vz}(G) = \mathbb{E}_{\vz_1, \vz_2} \left[ \frac{\frac{1}{L}\sum_{l=1}^L \left \lVert D^l(\vx,\vz_1) - D^l(\vx,\vz_2) \right \lVert}{\left \lVert \vz_1-\vz_2\right \lVert} \right],
\label{eqn:latent_regularization_fm}
\end{equation}
where $D^l$ indicates a feature extracted from $l$\textsubscript{th} layer of the discriminator $D$.
We denote our methods based on Eq.~(\ref{eqn:latent_regularization}) and Eq.~(\ref{eqn:latent_regularization_fm}) as \ours\textsubscript{RGB} and \ours\textsubscript{FM}, respectively.
Since there is no prior work on stochastic image inpainting to our best knowledge, we present comparisons of cGAN baseline along with our variants.

\cutparagraphup
\paragraph{Analysis on Regularization.}
We conduct both quantitative and qualitative comparisons of our methods and summarize the results in Table~\ref{tab:inpainting_quant} and Figure~\ref{fig:inpainting_qual}, respectively.
As we observed in the previous section, adding our regularization induces multi-modal outputs from the baseline cGAN.
See Figure~\ref{fig:inpainting_lambda} for qualitative impact of $\lambda$.
Interestingly, we can see that sample variations in \ours\textsubscript{RGB} tend to be in a low-level (\eg~global skin-color).
We believe that sample difference in color may not be appropriate for faces, since human reacts more sensitively to the changes in semantic features (\eg~facial landmarks) than just color.
Employing perceptual distance metric in our regularization leads to semantically more meaningful variations, such as expressions, identity, \etc.

\begin{minipage}{\textwidth}
\begin{minipage}[b]{0.3\textwidth}
\small
\centering
\begin{tabular}{p{1.3cm} c p{.6cm} c p{.6cm}}
\toprule
& FID & LPIPS \\
\midrule
 cGAN & 13.99 & 0.00  \\
 \ours\textsubscript{RGB} &13.95 & 0.01  \\
 \ours\textsubscript{FM} & {\bf 13.94} & {\bf 0.05}  \\
\bottomrule
\end{tabular}
\vspace{-0.1in}
\captionof{table}{Quantitative comparisons of our variants.}
\label{tab:inpainting_quant}
\end{minipage}
\begin{minipage}[b]{0.69\textwidth}
\centering
\includegraphics[width=1\textwidth]{./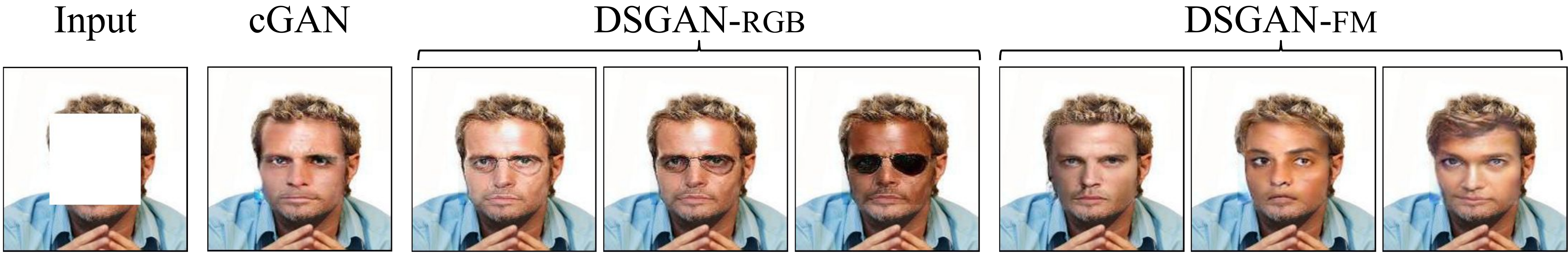}\\
\vspace{-0.1in}
\captionof{figure}{Qualitative comparisons of our variants. We present one example for baseline as it produces deterministic outputs.}
\label{fig:inpainting_qual}
\end{minipage}
\end{minipage}

\cutparagraphup
\paragraph{Analysis on Latent Space.}
To further understand if our regularization encourages $\vz$ to encode meaningful features, we conduct qualitative analysis on $\vz$.
We employ \ours\textsubscript{FM} for this experiments.
We generate multiple samples across various input conditions while fixing the latent codes $\vz$. 
Figure~\ref{fig:inpainting_fixed_noise} illustrates the results.
We observe that our method generates outputs which are realistic and diverse depending on $\vz$.
More interestingly, the generator outputs given the same $\vz$ exhibit similar attributes (\eg~gaze direction, smile) but also context-specific characteristics that match the input condition (\eg~skin color, hairs). 
It shows that our method guides the generator to learn meaningful latent factors in $\vz$, which are disentangled from the input context to some extent.

\begin{figure}[!ht]
    \centering
    \includegraphics[width=.99\textwidth]{./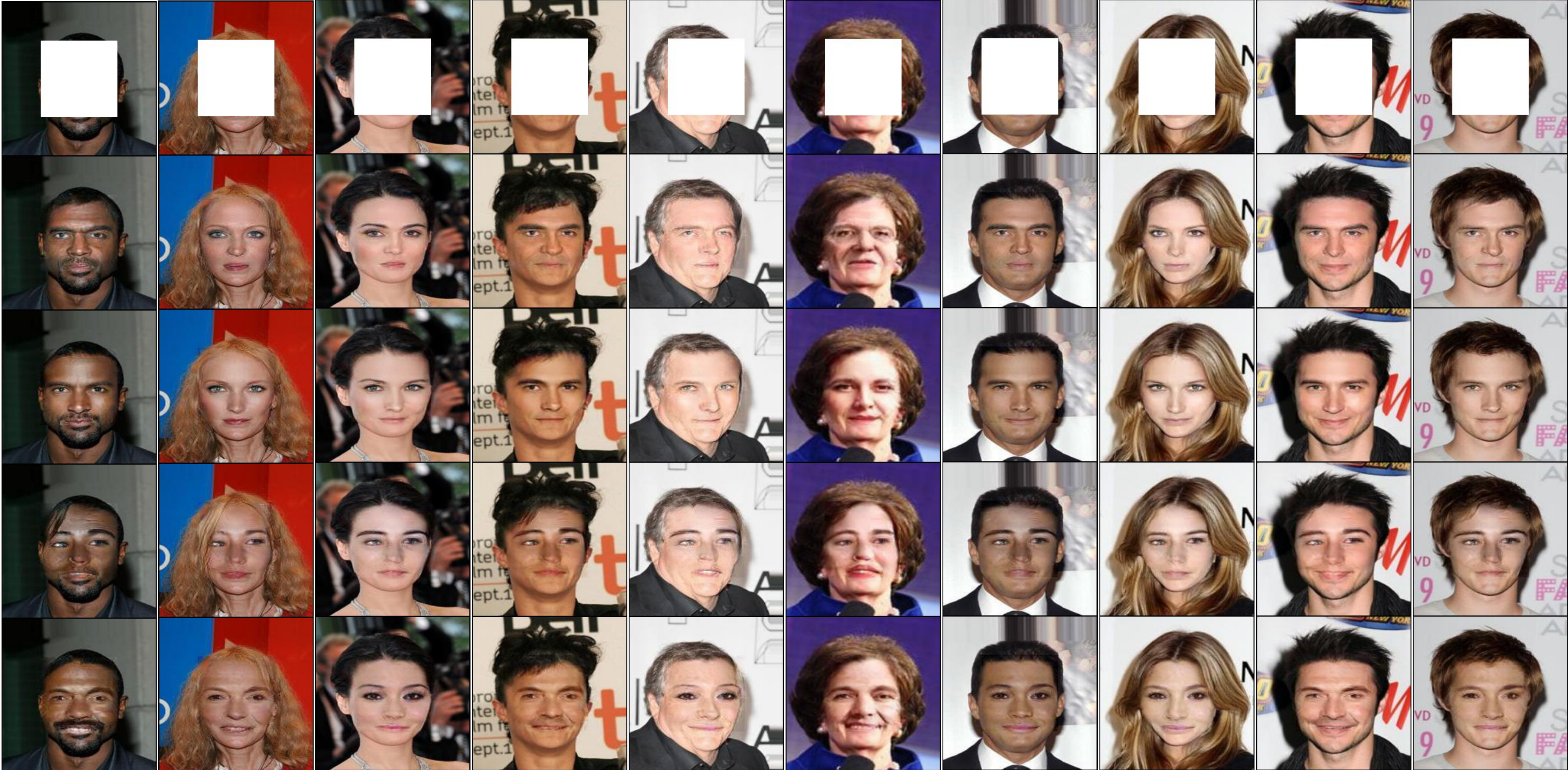}
    \vspace{-0.1in}
    \caption{Stochastic image inpainting results. 
    Given an input image with missing region (first row), we generate multiple faces by sampling different $\vz$ (second--fifth rows). 
    Each row is generated from the same $\vz$, and exhibits similar face attributes. 
    }
    \label{fig:inpainting_fixed_noise}
\end{figure}

\cutsubsectionup
\subsection{Video Prediction}
\cutsubsectiondown
\label{sec:video_prediction}
In this section, we apply our method to a conditional \textit{sequence} generation task.
Specifically, we consider a task of anticipating $T$ future frames $\{\vx_{K+1},\vx_{K+2},...,\vx_{K+T}\}\in\mathcal{Y}$ conditioned on $K$ previous frames $\{\vx_1,\vx_2,...,\vx_K\}\in\mathcal{X}$.
Since both the input and output of the generator are sequences in this task, we simply modify our regularization by
\begin{equation}
    \mathcal{L}_{\vz}(G) = 
    \mathbb{E}_{\vz_1, \vz_2} \left[  \frac{ \frac{1}{T}\sum_{t=K}^{K+T} \left \lVert G(\vx_{1:t},\vz_1) - G(\vx_{1:t},\vz_2) \right \lVert_1}{\left \lVert \vz_1-\vz_2\right \lVert_1} \right],
\label{eqn:latent_regularization_video}
\end{equation}
where $\vx_{1:t}$ represents a set of frames from time step 1 to $t$.

We compare our method against SAVP~\citep{savp}, which also addresses the multi-modal video prediction task.
Similar to \citet{bicycleGAN}, it employs a hybrid model of conditional GAN and VAE, but using the recurrent generator designed specifically for future frame prediction.
We take only GAN component (generator and discriminator networks) from SAVP as a baseline cGAN model and apply our regularization with $\lambda=50$ to induce stochasticity.
We use $| \vz |=8$ for all compared methods.
See Appendix~\ref{apx:videopred_network} for more details about the network architecture.

We conduct experiments on two datasets from the previous literature: the BAIR action-free robot pushing dataset~\citep{robotArm} and the KTH human actions dataset~\citep{kth}.
To measure both the diversity and the quality, we generate 100 random samples of 28 future frames for each test video and compute the (a) \textit{Diversity} (pixel-wise distance among the predicted videos) and the (b) \textit{Dist\textsubscript{min}} (minimum pixel-wise distance between the predicted videos and the ground truth). 
Also, for a better understanding of quality, we additionally measured the (c) \textit{Sim\textsubscript{max}} (largest cosine similarity metric between the predicted video and the ground truth on VGGNet~\citep{vggNet} feature space). 
An ideal stochastic video prediction model may have higher \textit{Diversity}, while having lower \textit{Dist\textsubscript{min}} with higher \textit{Sim\textsubscript{max}} so that a model still can predict similar to the ground truth as a candidate. 
More details about evaluation metric are described in Appendix~\ref{apx:videopred_evaluation_metrics}.

\begin{figure}[!t]
    \centering
    \includegraphics[width=.993\linewidth]{./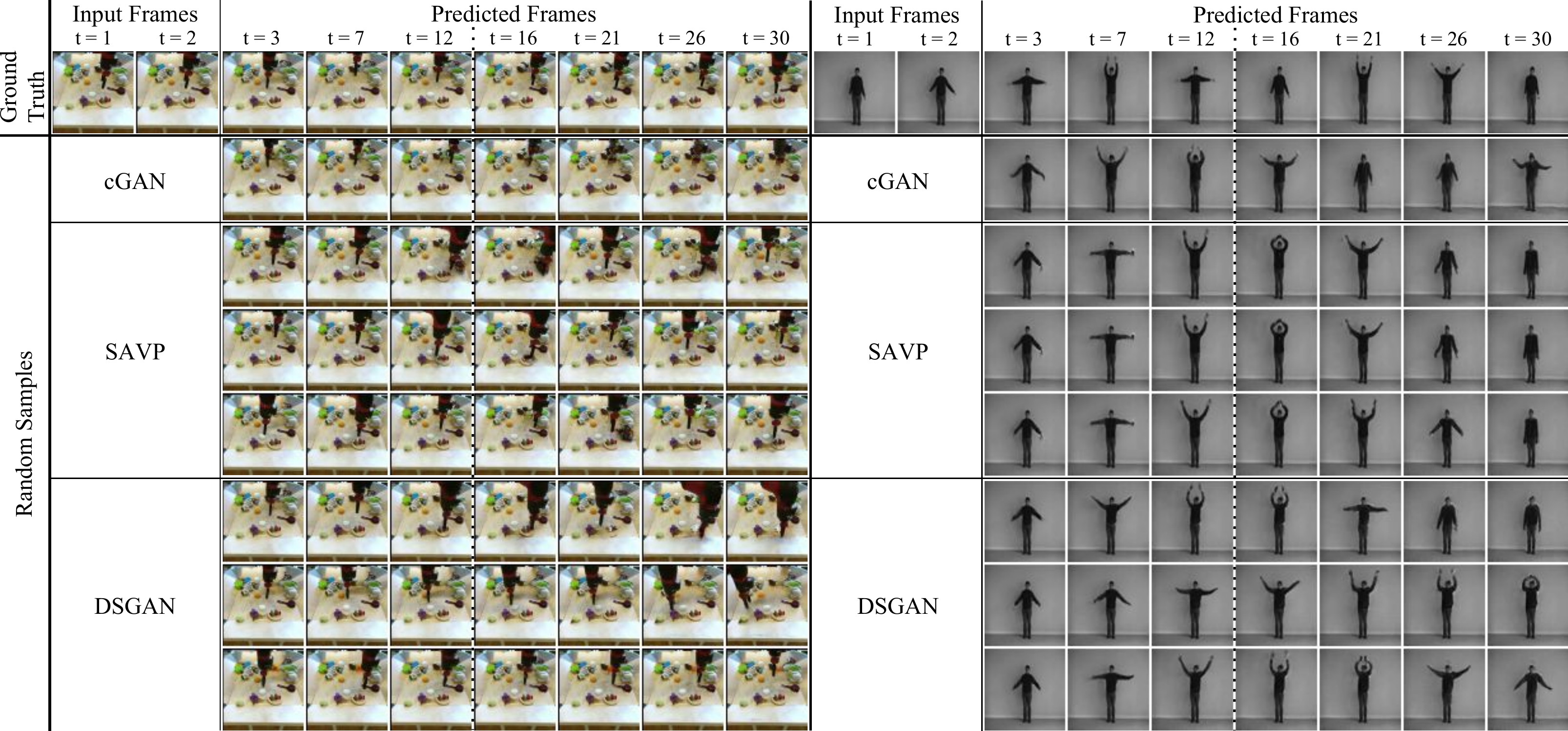}
    \vspace{-0.1in}
    \caption{Stochastic video prediction results. 
    Given two input frames, we present three random samples generated by each method. 
    Compared to the baseline that produces deterministic outputs and SAVP that has limited diversity in KTH, our method generates diverse futures in both datasets.
    } 
    \label{fig:video_diff_comp}
    \vspace{-0.04in}
\end{figure}

\begin{table}[!t]
\small
\centering
\vspace{-0.1in}
\begin{tabular}{l|*3c|*3c}
\multicolumn{7}{r}{(Base: $1.0\times10^{-3}$)} \\
\toprule
& \multicolumn{3}{c}{\bf{\textit{BAIR}}} & \multicolumn{3}{|c}{{\bf{\textit{KTH}}}}  \\
Method & \textit{Diversity} & \textit{Sim\textsubscript{max}} & \textit{Dist\textsubscript{min}}  
& \textit{Diversity} & \textit{Sim\textsubscript{max}} & \textit{Dist\textsubscript{min}} \\
\hline
cGAN & 2.48 & 861.92 & 22.46 & 0.04 & 802.79 & 5.00 \\
SAVP & 18.93 & 869.58  & 20.44 & 0.51 & 777.70 & 5.48 \\
\ours & \textbf{26.75} & \textbf{874.12} & \textbf{18.46} & \textbf{3.96} & \textbf{855.10} & \textbf{3.84} \\
\bottomrule
\end{tabular}
\vspace{-0.1in}
\caption{Comparisons of cGAN, SAVP, and \ours~(ours). \textit{Diversity}: pixel-wise distance among the predicted videos. \textit{Sim\textsubscript{max}}: largest cosine similarity between the predicted video and the ground truth. \textit{Dist\textsubscript{min}}: closest pixel-wise distance between the predicted video and the ground truth.}
\label{tab:video_comp_SAVP}
\vspace{-0.15in}
\end{table}
We present both quantitative and qualitative comparison results in Table~\ref{tab:video_comp_SAVP} and Figure~\ref{fig:video_diff_comp}, respectively.
As illustrated in the results, both our method and SAVP can predict diverse futures compared to the baseline cGAN that produces deterministic outputs.
As shown in Table~\ref{tab:video_comp_SAVP}, our method generates more diverse and realistic outputs than SAVP with much less number of parameters and simpler training procedures. 
Interestingly, as shown in KTH results, SAVP still suffers from a mode-collapse problem when the training videos have limited diversity, whereas our method generally works well in both cases.
It shows that our method generalizes much better to various videos despite its simplicity.

\cutsectionup
\section{Conclusion}
\cutsectiondown
\label{conclusion}
In this paper, we investigate a way to resolve a mode-collapsing in conditional GAN by regularizing generator.
The proposed regularization is simple, general, and can be easily integrated into existing conditional GANs with broad classes of loss function, network architecture, and data modality.
We apply our regularization for three conditional generation tasks and show that simple addition of our regularization to existing cGAN objective effectively induces the diversity.
We believe that achieving an appropriate balance between realism and diversity by \emph{learning} $\lambda$ and $\tau$ such that the learned distribution matches an actual data distribution would be an interesting future work.

\paragraph{Acknowledgement}
This work was supported in part by ONR N00014-13-1-0762, NSF CAREER IIS-1453651, DARPA Explainable AI (XAI) program \#313498, and Sloan Research Fellowship.

\bibliography{multimodalGAN}
\bibliographystyle{iclr2019_conference}

\clearpage
\appendix
\section*{Appendix}
\label{sec:appendix}

\setcounter{table}{0}
\setcounter{figure}{0}
\renewcommand{\thetable}{\Alph{table}}
\renewcommand{\thefigure}{\Alph{figure}}

\section{Derivation of lower-bound of gradient norm}
\label{appx:gradient_bound}
In this section, we provide a derivation of our regularization term from a true gradient norm of the generator.
Given arbitrary latent samples $\vz_1,\vz_2$, from gradient theorem we have
\begin{align}
\label{eq:gradientthm}
    \frac{\lVert G(\vx,\vz_2) - G(\vx,\vz_1) \rVert}{\lVert\vz_2 - \vz_1 \rVert} &= \frac{\lVert \int_{\bm{\gamma}[\vz_1,\vz_2]}\nabla_{\vz} G(\vx,\vz)\cdot d\vz \rVert}{\lVert\vz_2 - \vz_1 \rVert} \nn\\
    &= \frac{\lVert \int_0^1\nabla_{\vz} G(\vx,\bm{\gamma}(t))\cdot \bm{\gamma}'(t)dt \rVert}{\lVert\vz_2 - \vz_1 \rVert} \nn\\
    &= \frac{\lVert \int_0^1\nabla_{\vz} G(\vx,\bm{\gamma}(t))\cdot (\vz_2-\vz_1)dt \rVert}{\lVert\vz_2 - \vz_1 \rVert} \nn\\
    &\leq \frac{ \int_0^1\lVert \nabla_{\vz}G(\vx,\bm{\gamma}(t))\rVert\lVert\vz_2-\vz_1\rVert dt }{\lVert\vz_2 - \vz_1 \rVert}\nn\\
    &= \int_0^1\lVert\nabla_{\vz} G(\vx,\bm{\gamma}(t))\rVert dt\;,
\end{align}
where $\bm{\gamma}$ is a straight line connecting $\vz_1$ and $\vz_2$, where $\bm{\gamma}(0)=\vz_1$ and $\bm{\gamma}(1)=\vz_2$.

Apply expectation on both sides of (\ref{eq:gradientthm}) with respect to $\vz_1,\vz_2$ from standard Gaussian distribution gives~Eqn.~\ref{ineq:relate_grad}:
\begin{align}
    \mathbb{E}_{\vz_1,\vz_2}\left[\frac{\lVert G(\vx,\vz_2) - G(\vx,\vz_1) \rVert}{\lVert\vz_2 - \vz_1 \rVert}\right] \leq \mathbb{E}_{\vz_1,\vz_2}\left[ \int_0^1\lVert\nabla_{\vz} G(\vx,\bm{\gamma}(t))\rVert dt\right]\;.
\end{align}

\section{Collapsing to a Mode as a Group}
\label{appx:samplemodecollapseanal}

For notational simplicity, we omit the conditioning variable from the generator and focus on a mapping of latent code to output $G_\theta: \mathcal{Z} \to \mathcal{Y}$ where $\mathcal{Y}$ is the image space. 

\label{apdx:groupcollapse}

\begin{definition} 
A mode $\mathcal{M}$ is a subset of $\mathcal{Y}$ satisfying $\max_{\vy\in\mathcal{M}} \lVert \vy - \vy^*\rVert < \alpha$ for some  image $\vy^*$ and $\alpha>0$. Let $\vz_1$ be a sample in latent space, we say $\vz_1$ is attracted to a mode $\mathcal{M}$ by $\epsilon$ from a gradient step if $\lVert  \vy^* - G_{\theta_{t+1}}( \vz_1)\rVert + \epsilon < \lVert \vy^* - G_{\theta_t}(\vz_1)\rVert$, where $\vy^*\in\mathcal{M}$ is an image in a mode, $\theta_t$ and $\theta_{t+1}$ are the generator parameters before and after the gradient updates respectively.
\label{def:ptwisemodecollapse}
\end{definition}

In other words, we define modes as sets consisting of images that are close to some real images, and we consider a situation where the generator output $G_{\theta_t}(\vz_1)$ at certain $\vz_1$ is attracted to a mode $\mathcal{M}$ by a single gradient update.

With Definition~\ref{def:ptwisemodecollapse}, we are now ready to state and prove the following proposition.

\begin{proposition}
Suppose $\vz_1$ is attracted to the mode $\mathcal{M}$ by $\epsilon$, then there exists a neighborhood $\mathcal{N}_r(\vz_1)$ of $\vz_1$ such that $\vz_2$ is attracted to $\mathcal{M}$ by $\epsilon/2$, for all $\vz_2 \in \mathcal{N}_r(\vz_1)$. The size of $\mathcal{N}_r(\vz_1)$ can be arbitrarily large but is bounded by an open ball of radius $r$ where
\begin{align}
    r=\epsilon\cdot \left(4\inf_{\vz} \left\{ \max\left(\frac{\lVert G_{\theta_t}(\vz_1) - G_{\theta_t}(\vz)\rVert}{\lVert \vz_1-\vz \rVert},\frac{\lVert G_{\theta_{t+1}}(\vz_1) - G_{\theta_{t+1}}(\vz)\rVert}{\lVert \vz_1-\vz \rVert}\right) \right\}\right)^{-1}\;.
\end{align}
\label{proposition2}
\end{proposition}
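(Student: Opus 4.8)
The plan is to control how far the effect of a single gradient update propagates from $\vz_1$ to a neighbor $\vz_2$ by using the triangle inequality together with the assumed (finite, since we take an infimum over $\vz$) Lipschitz-type quantities for $G_{\theta_t}$ and $G_{\theta_{t+1}}$. First I would write $L := \inf_{\vz}\max\bigl(\frac{\lVert G_{\theta_t}(\vz_1)-G_{\theta_t}(\vz)\rVert}{\lVert\vz_1-\vz\rVert},\frac{\lVert G_{\theta_{t+1}}(\vz_1)-G_{\theta_{t+1}}(\vz)\rVert}{\lVert\vz_1-\vz\rVert}\bigr)$, so that for every $\vz_2$ we have both $\lVert G_{\theta_t}(\vz_1)-G_{\theta_t}(\vz_2)\rVert \le L\,\lVert\vz_1-\vz_2\rVert$ and $\lVert G_{\theta_{t+1}}(\vz_1)-G_{\theta_{t+1}}(\vz_2)\rVert \le L\,\lVert\vz_1-\vz_2\rVert$. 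Then, for $\vz_2$ in the open ball of radius $r=\epsilon/(4L)$ around $\vz_1$, each of these displacements is at most $\epsilon/4$.

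Next I would estimate $\lVert\vy^* - G_{\theta_{t+1}}(\vz_2)\rVert$ from above and $\lVert\vy^* - G_{\theta_t}(\vz_2)\rVert$ from below, inserting $G_{\theta_{t+1}}(\vz_1)$ and $G_{\theta_t}(\vz_1)$ respectively via the triangle inequality:
\begin{align}
\lVert\vy^*-G_{\theta_{t+1}}(\vz_2)\rVert &\le \lVert\vy^*-G_{\theta_{t+1}}(\vz_1)\rVert + \lVert G_{\theta_{t+1}}(\vz_1)-G_{\theta_{t+1}}(\vz_2)\rVert \le \lVert\vy^*-G_{\theta_{t+1}}(\vz_1)\rVert + \tfrac{\epsilon}{4},\\
\lVert\vy^*-G_{\theta_t}(\vz_2)\rVert &\ge \lVert\vy^*-G_{\theta_t}(\vz_1)\rVert - \lVert G_{\theta_t}(\vz_1)-G_{\theta_t}(\vz_2)\rVert \ge \lVert\vy^*-G_{\theta_t}(\vz_1)\rVert - \tfrac{\epsilon}{4}.
\end{align}
Using the hypothesis $\lVert\vy^*-G_{\theta_{t+1}}(\vz_1)\rVert + \epsilon < \lVert\vy^*-G_{\theta_t}(\vz_1)\rVert$ from Definition~\ref{def:ptwisemodecollapse}, I would chain these inequalities to get $\lVert\vy^*-G_{\theta_{t+1}}(\vz_2)\rVert + \epsilon/2 < \lVert\vy^*-G_{\theta_t}(\vz_2)\rVert$, which is exactly the statement that $\vz_2$ is attracted to $\mathcal{M}$ by $\epsilon/2$. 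For the ``can be arbitrarily large'' clause, I would note that $L$ can be made as small as we like (e.g.\ by the regularizer not being active, or by the generator being nearly constant), pushing $r$ arbitrarily large, while the ball of radius $r$ is always an upper bound on the neighborhood that provably collapses.

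The main obstacle I anticipate is purely a matter of careful bookkeeping with the infimum: since $L$ is defined as an infimum over all $\vz$, I must be sure the bound $\lVert G_{\theta}(\vz_1)-G_{\theta}(\vz_2)\rVert \le L\lVert\vz_1-\vz_2\rVert$ actually holds for the specific $\vz_2$ of interest — this requires that the per-point ratio at $\vz_2$ is at least $L$, which is immediate from the definition of infimum, but one should state it cleanly (and note the degenerate case $\lVert\vz_1-\vz_2\rVert=0$ is trivial). A secondary subtlety is that the update is described as a gradient step ``at $\vz_1$,'' so one must simply take the hypothesis of Definition~\ref{def:ptwisemodecollapse} at face value and not attempt to re-derive the motion of $G(\vz_1)$ from the GAN loss; the proposition is a deterministic geometric consequence of that hypothesis plus the Lipschitz control, and no further regularity on $D$ is needed.
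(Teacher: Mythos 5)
Your triangle-inequality skeleton is the same as the paper's: insert $G_{\theta_{t+1}}(\vz_1)$ and $G_{\theta_t}(\vz_1)$, invoke the attraction hypothesis at $\vz_1$, and conclude attraction by $\epsilon/2$ at any $\vz_2$ whose two output displacements sum to at most $\epsilon/2$. The gap is in the step that converts this into a ball. You claim that $L:=\inf_{\vz}\max(\cdot)$ gives $\lVert G_{\theta_t}(\vz_1)-G_{\theta_t}(\vz_2)\rVert\le L\,\lVert\vz_1-\vz_2\rVert$ for every $\vz_2$, but an infimum bounds the pointwise ratio from \emph{below}, not above: the definition gives $\max\bigl(\frac{\lVert G_{\theta_t}(\vz_1)-G_{\theta_t}(\vz_2)\rVert}{\lVert\vz_1-\vz_2\rVert},\frac{\lVert G_{\theta_{t+1}}(\vz_1)-G_{\theta_{t+1}}(\vz_2)\rVert}{\lVert\vz_1-\vz_2\rVert}\bigr)\ge L$, i.e.\ at least one displacement is at least $L\lVert\vz_1-\vz_2\rVert$. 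Your own closing remark (``the per-point ratio at $\vz_2$ is at least $L$'') is the reverse of the inequality your chain of estimates needs. Consequently a point $\vz_2$ inside $B_{\epsilon/(4L)}(\vz_1)$ may have an arbitrarily large ratio, hence an output displacement far exceeding $\epsilon/4$, and the whole ball need not be attracted. What your argument actually establishes is the variant with $\sup_{\vz}$ in place of $\inf_{\vz}$ --- a different and generally much smaller ball (empty of content when the supremum is infinite), which also does not match the stated radius or the ``can be arbitrarily large but is bounded by'' phrasing.

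The paper avoids assuming a uniform Lipschitz constant by stratifying: for each $\tau>0$ it forms the sublevel set $\mathcal{N}_\tau(\vz_1)$ on which both ratios are at most $\tau$, intersects it with $B_{\epsilon/(4\tau)}(\vz_1)$ so that each displacement is below $\epsilon/4$ and the sufficient condition holds, and defines $\mathcal{N}_r(\vz_1)$ as the union of these intersections over all $\tau>0$. The infimum $L$ enters only in the final containment $\mathcal{N}_r(\vz_1)\subseteq B_{\epsilon/(4L)}(\vz_1)$: every $\vz_2$ has $\max$-ratio at least $L$, so membership in some $\mathcal{N}_{\tau}(\vz_1)$ forces $\tau\ge L$ and hence $\lVert\vz_1-\vz_2\rVert<\epsilon/(4\tau)\le\epsilon/(4L)$. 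Note also that the resulting neighborhood is in general a proper subset of the ball of radius $r$, not the ball itself as your write-up asserts. To repair your proof, replace the single constant $L$ by this $\tau$-indexed family, or fix a $\tau$, restrict to the set where both ratios are at most $\tau$, and only then take the ball of radius $\epsilon/(4\tau)$.
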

\begin{proof}
Consider the following expansion.
\begin{align}
    \lVert \vy^* - G_{\theta_{t+1}}(\vz_2)\rVert &\leq \lVert \vy^* - G_{\theta_{t+1}}(\vz_1)\rVert + \lVert G_{\theta_{t+1}}(\vz_1) - G_{\theta_{t+1}}(\vz_2)\rVert\nn\\
    &< \lVert \vy^* - G_{\theta_t}(\vz_1)\rVert + \lVert G_{\theta_{t+1}}(\vz_1) - G_{\theta_{t+1}}(\vz_2)\rVert-\epsilon \tag*{(Definition~\ref{def:ptwisemodecollapse})}\nn\\
    &\leq \lVert \vy^* - G_{\theta_t}(\vz_2)\rVert + \lVert G_{\theta_t}(\vz_2) - G_{\theta_t}(\vz_1)\rVert + \lVert G_{\theta_{t+1}}(\vz_1) - G_{\theta_{t+1}}(\vz_2)\rVert-\epsilon\nn\\
    &= \lVert \vy^* - G_{\theta_t}(\vz_2)\rVert \nn\\&+\left( \frac{\lVert G_{\theta_t}(\vz_1) - G_{\theta_t}(\vz_2)\rVert}{\lVert \vz_1-\vz_2 \rVert} + \frac{\lVert G_{\theta_{t+1}}(\vz_1) - G_{\theta_{t+1}}(\vz_2)\rVert}{\lVert \vz_1-\vz_2 \rVert}\right)\lVert \vz_1-\vz_2 \rVert-\epsilon.
    \label{eqn:mode_neighbor_bound}
\end{align}
Eq.~(\ref{eqn:mode_neighbor_bound}) implies that %
\begin{align}
\lVert \vy^* - G_{\theta_{t+1}}(\vz_2)\rVert + \frac{\epsilon}{2} < \lVert \vy^* - G_{\theta_t}(\vz_2)\rVert,
\label{eqn:mode_neighbor_collapse}
\end{align}

for all $\vz_2$ that satisfies 
\begin{align}
\left( \frac{\lVert G_{\theta_t}(\vz_1) - G_{\theta_t}(\vz_2)\rVert}{\lVert \vz_1-\vz_2 \rVert} + \frac{\lVert G_{\theta_{t+1}}(\vz_1) - G_{\theta_{t+1}}(\vz_2)\rVert}{\lVert \vz_1-\vz_2 \rVert}\right)\lVert \vz_1-\vz_2 \rVert \leq \frac{\epsilon}{2}.
\label{eqn:mode_neighbor_ineq}
\end{align}

Define $\mathcal{N}_{\tau}(\vz_1) = \left\{\vz: \max\left(\frac{\lVert G_{\theta_t}(\vz_1) - G_{\theta_t}(\vz)\rVert}{\lVert \vz_1-\vz \rVert},\frac{\lVert G_{\theta_{t+1}}(\vz_1) - G_{\theta_{t+1}}(\vz)\rVert}{\lVert \vz_1-\vz \rVert}\right) \leq \tau\right\}$, then Eq.~(\ref{eqn:mode_neighbor_ineq}) holds for any $\vz_2 \in \bigcup _{\tau>0}\left\{B_{\epsilon/4\tau}(\vz_1) \cap \mathcal{N}_{\tau}(\vz_1) \right\}=\mathcal{N}_r(\vz_1)$. We have $\mathcal{N}_r(\vz_1) \neq \emptyset$ since $\vz_1 \in \mathcal{N}_r(\vz_1)$.

The size of $\mathcal{N}_r(\vz_1)$ can be arbitrarily large if $B_{\epsilon/4\tau}(\vz_1) \subseteq \mathcal{N}_{\tau}(\vz_1)$ for arbitrarily small $\tau$. And for any $\tau>0$ such that $\tau \leq \max\left(\frac{\lVert G_{\theta_t}(\vz_1) - G_{\theta_t}(\vz)\rVert}{\lVert \vz_1-\vz \rVert},\frac{\lVert G_{\theta_{t+1}}(\vz_1) - G_{\theta_{t+1}}(\vz)\rVert}{\lVert \vz_1-\vz \rVert}\right)$ for all $\vz$, we have $\mathcal{N}_r(\vz_1) \subseteq B_{\epsilon/4\tau}(\vz_1)$. In particular, pick the largest $\tau$ possible yields the bound $\mathcal{N}_r(\vz_1) \subseteq B_r(\vz_1)$,
\begin{align}
    r=\epsilon\cdot \left(4\inf_{\vz} \left\{ \max\left(\frac{\lVert G_{\theta_t}(\vz_1) - G_{\theta_t}(\vz)\rVert}{\lVert \vz_1-\vz \rVert},\frac{\lVert G_{\theta_{t+1}}(\vz_1) - G_{\theta_{t+1}}(\vz)\rVert}{\lVert \vz_1-\vz \rVert}\right) \right\}\right)^{-1}\;.
\end{align}

\end{proof}

\clearpage
\section{Ablation Study}
\label{apx:ablation study}
\subsection{Application to Unconditional GAN}
Since the proposed regularization is not only limited to conditional GAN, we further analyze its impact on unconditional GAN.
To this end, we adopt synthetic datasets from \citep{VEEGAN, unrolledGANs}, a mixture of eight 2D Gaussian distributions arranged in a ring. 
For unconditional generator and discriminator, we adopt the vanilla GAN implementation from \citep{VEEGAN}, and train the model with and without our regularization ($\lambda=0.1$).
We follow the same evaluation protocols used in \citep{VEEGAN}. 
It generates 2,500 samples by the generator, and counts a sample as high-quality if it is within three standard deviations of the nearest mode. 
Then the performance is reported by 1) counting the number of modes containing at least one high-quality sample and 2) computing the portion of high-quality samples from all generated ones.
We summarize the qualitative and quantitative results (10-run average) in Figure~\ref{fig:uncond_results} and Table~\ref{tab:uncond_results}, respectively.

As illustrated in Figure~\ref{fig:uncond_results}, we observe that vanilla GAN experiences a severe mode collapse, which puts a significant probability mass around a single output mode.
Contrary to results reported in \citep{VEEGAN}, we observed that the mode captured by the generator is still not close enough to the actual mode, resulted in $0$ high-quality samples as shown in Table~\ref{tab:uncond_results}.
On the other hand, applying our regularization effectively alleviates the mode-collapse problem by encouraging the generator to efficiently explore the data space, enabling the generator to capture much more modes compared to vanilla GAN setting.

\begin{figure}[!ht]
    \centering
    \includegraphics[width=1.0\textwidth]{./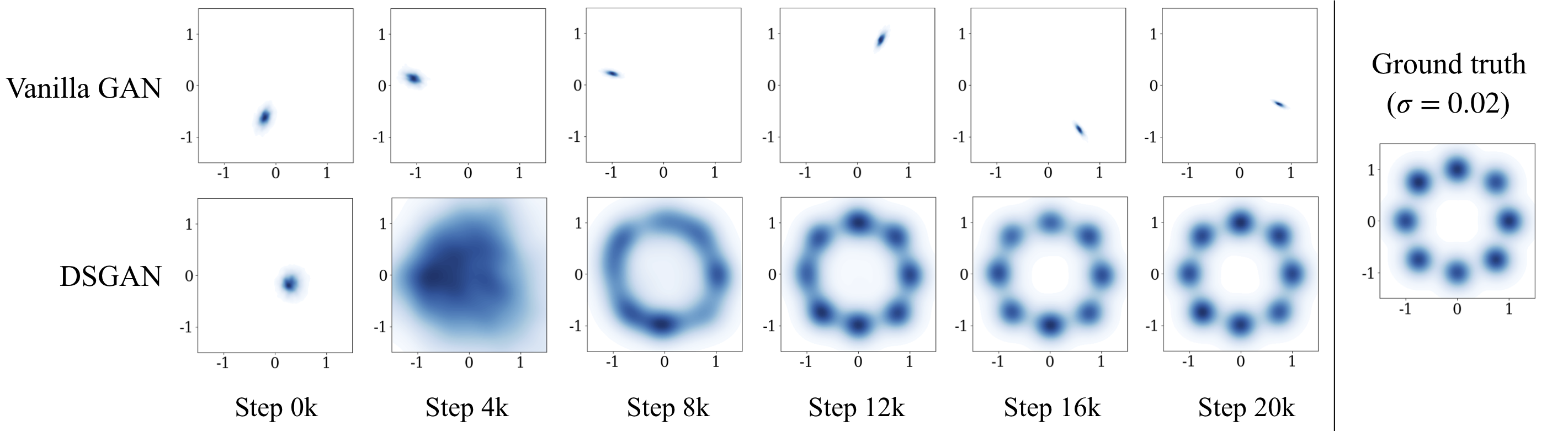}
    \vspace{-0.2in}
    \caption{Density plots of ground-truth, vanilla GAN and DSGAN.}
    \label{fig:uncond_results}
\end{figure}
\begin{table}[!h]
\small
\centering
\vspace{-0.1in}
\begin{tabular}{l|*2c}
\toprule
& \multicolumn{2}{|c}{2D Ring}  \\
Method & \makecell{\textbf{Modes} \\ \textbf{(Max 8)}} & \makecell{\textbf{High Quality} \\ \textbf{Samples (\%)}} \\
\hline
Vanilla GAN & 0 (1\textsuperscript{*})  & 0.0 (99.3\textsuperscript{*})\\
Unrolled GAN & 7.6 & 35.6 \\
VEEGAN & 8 & 52.9 \\
DSGAN & 8 & \textbf{81.4}\\
\bottomrule
\end{tabular}
\vspace{-0.1in}
\caption{Quantitative evaluation results (* denotes the results reported in \citet{VEEGAN}).}
\label{tab:uncond_results}
\vspace{-0.15in}
\end{table}

\label{apx:unconditional DSGAN}
\subsection{Impact on Unbalanced cGAN}
\label{apx:unbalanced cGAN}
In principle, our regularization can help the generator to cope with vanishing gradient from the discriminator to some extent, as it spreads out the generator landscape thus increases the chance to capture useful gradient signals around true data points.
To verify its impact, we simulate the vanishing gradient problem by training the baseline cGAN until it converges and retraining the model with our regularization while initializing the discriminator with the pre-trained weights.
Empirically we observed that the pre-trained discriminator can distinguish the real data and generated samples from the randomly initialized generator almost perfectly, and the generator experiences a severe vanishing gradient problem at the beginning of the training. 
We use the image-to-image translation model on \textit{label$\rightarrow$image} dataset for this experiment.

In the experiment, we found that the generator converges to the FID and LPIPS scores of 52.32 and 0.16, respectively, which are close to the ones we achieved with the balanced discriminator (FID: 57.20, LPIPS: 0.18).
We observed that our regularization loss goes down very quickly in the early stage of training, which helps the generator to efficiently explorer its output space when the discriminator gradients are vanishing. 
Together with the reconstruction loss, we found that it helps the generator to capture useful learning signals from the discriminator and learn both realistic and diverse modes in the conditional distribution.

\clearpage %
\section{Additional experiment results}
\label{apx:additional_results}
This section provides additional experiment details and results that could not be accommodated in the main paper due to space restriction.
We are going to release the code and datasets upon the acceptance of the paper.

\subsection{Image-to-Image Translation}
\label{apx:pix2pix}

\subsubsection{Baseline Models}
\paragraph{BicycleGAN's Generator and Discriminator.} We use BicycleGAN's generator and discriminator structures as a baseline cGAN. 
The baseline model has exactly the same hyperparameters as BicycleGAN including the weight of pixel-wise L1 loss and GAN loss.
The baseline model setting then basically becomes a pix2pix image-to-image translation setting~\citep{pix2pix}. 
The generator architecture is a U-Net style network~\citep{UNet} and the discriminator is a two-scale patchGAN-style~\citep{PatchGAN} network. 

\paragraph{Pix2pixHD Baseline.} 
In this setting, we adopt the generator and discriminator networks from pix2pixHD~\citep{pix2pixHD} as a baseline cGAN. 
Compared to the original pix2pixHD network that employs two nested generators, we use only one generator for simplicity.
Since the original generator does not contain stochastic component, we modified the generator network by injecting the latent code after the downsampling layers by spatial tiling and depth-wise concatenation.  
The discriminator is a two-scale patchGAN-style~\citep{PatchGAN} network. 
Following the original setting, we employ feature matching loss based on discriminator~\citep{pix2pixHD} and perceptual loss based on the pre-trained VGGNet~\citep{vggNet} as a reconstruction loss in Eq.~(\ref{eqn:practical_obj}).

\subsubsection{Evaluation Metrics}
Here we provide a detailed descriptions for evaluation metrics and evaluation protocols.

\paragraph{Learned Perceptual Image Patch Similarity, LPIPS~\citep{LPIPS}.} 
LPIPS score measures the diversity of the generated samples using the L1 distance of features extracted from pretrained AlexNet~\citep{AlexNet}. 
We generate 20 samples for each validation image, and compute the average of pairwise distances between all samples generated from the same input.
Then we report the average of LPIPS scores over all validation images.

\paragraph{Fr\'{e}chet Inception Distance, FID~\citep{FID}.} For each method, we compute FID score on the validation dataset. For each input from the validation dataset, we sample 20 randomly generated output. We take the generated images as a generated dataset and compute the FID score between the generated dataset and training dataset. If the size of an image is different between the training dataset and generated dataset, we resize training images to the size of generated images.  We use the features from the final average pooling layer of the InceptionV3~\citep{inceptionNet} network to compute Fr\'echet Distance.

\paragraph{Human Evaluation via Amazon Mechanical Turk~(AMT).} 
To compare the perceptual quality of generations among different methods, we conduct human evaluation via AMT.
We conduct side-by-side comparisons between our method and a competitor (\ie~baseline cGAN and BicycleGAN).
Specifically, we present two sets of images generated by each compared method given the same input condition, and ask turkers to choose the set that is visually more plausible and matches the input condition. 
Each set has 6 randomly sampled images. 
We collect answers over 100 examples for each dataset, where each question is answered by 5 unique turkers.

\subsubsection{Qualitative Results}
\label{subsub:qualitative_results}
\paragraph{Qualitative Comparison.}
We present the qualitative comparisons of various methods presented in Table~\ref{tab:pix2pix_comp_bicycleGAN_datasets} and Table~\ref{tab:pix2pixHD_comp_cityscape} in the main paper.
Figure~\ref{fig:qualitative_results_1} illustrates the qualitative comparison results of \ours~(ours), baseline cGAN and BicycleGAN in Table~\ref{tab:pix2pix_comp_bicycleGAN_datasets}.
In the example of {\textit{edges$\rightarrow$photo}} dataset, the input edge images miss some of the features in the ground-truth images (\eg~shoelace).
While both \ours~and baseline cGAN are able to capture such missing parts in an input, we observe that some of BicycleGAN's outputs are missing it.
Also in the example of {\textit{maps$\rightarrow$images}} dataset, both \ours~and baseline cGAN can generate natural and variable vegetation in the corresponding area while BicycleGAN tends to generate plain texture with global color variations in such area. 
These two examples show how our regularization can help cGAN to learn more visually reasonable and diverse results. 
We additionally present the qualitative comparison results of Table~\ref{tab:pix2pixHD_comp_cityscape}.
We observe that the generation results from BicycleGAN suffers from low-visual quality, while our method is able to generate fine details of the objects and scene by exploiting the network for high-resolution image synthesis.

\begin{figure}[!h]
    \centering
    \includegraphics[width=0.99\textwidth]{./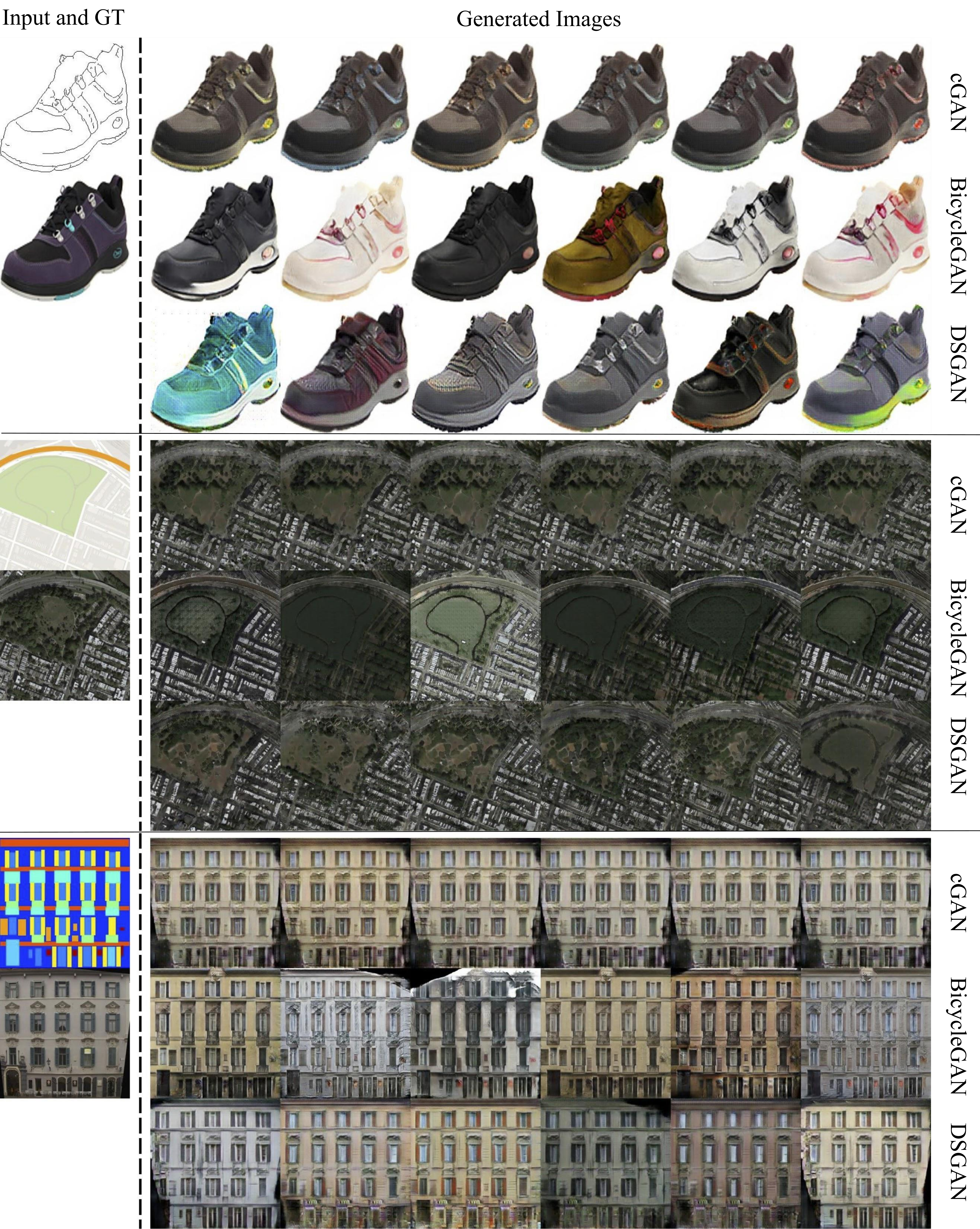}
    \vspace{-0.1in}
    \caption{Qualitative comparisons of Table~\ref{tab:pix2pix_comp_bicycleGAN_datasets}. 
    }
    \label{fig:qualitative_results_1}
\end{figure}

\begin{figure}[!h]
    \centering
    \includegraphics[width=0.99\textwidth]{./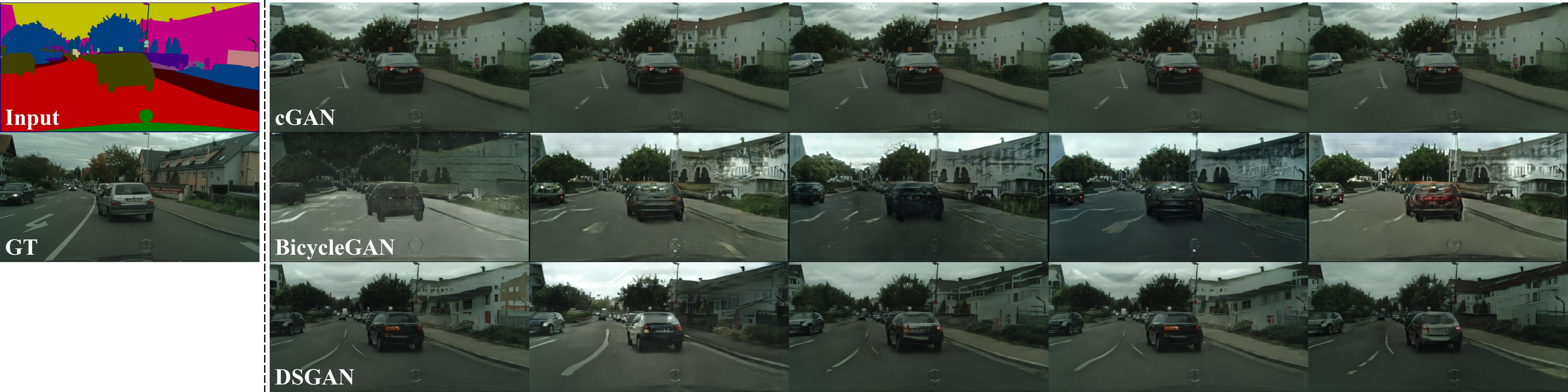}
    \vspace{-0.1in}
    \caption{Qualitative comparisons of Table~\ref{tab:pix2pixHD_comp_cityscape}.
    }
    \label{fig:qualitative_results_2}
\end{figure}

\paragraph{Analysis on Latent Space.} 
To better understand the latent space learned with the proposed regularization, 
we generate images in Cityscape dataset by interpolating two randomly sampled latent vectors by spherical linear interpolation~\citep{SPLT}.
Figure~\ref{fig:latent_interpolation} illustrates the interpolation results. 
As shown in the figure, the intermediate generation results are all reasonable and exhibit smooth transitions, which implies that the learned latent space has a smooth manifold.

\begin{figure}[!h]
    \centering
    \includegraphics[width=1.0\textwidth]{./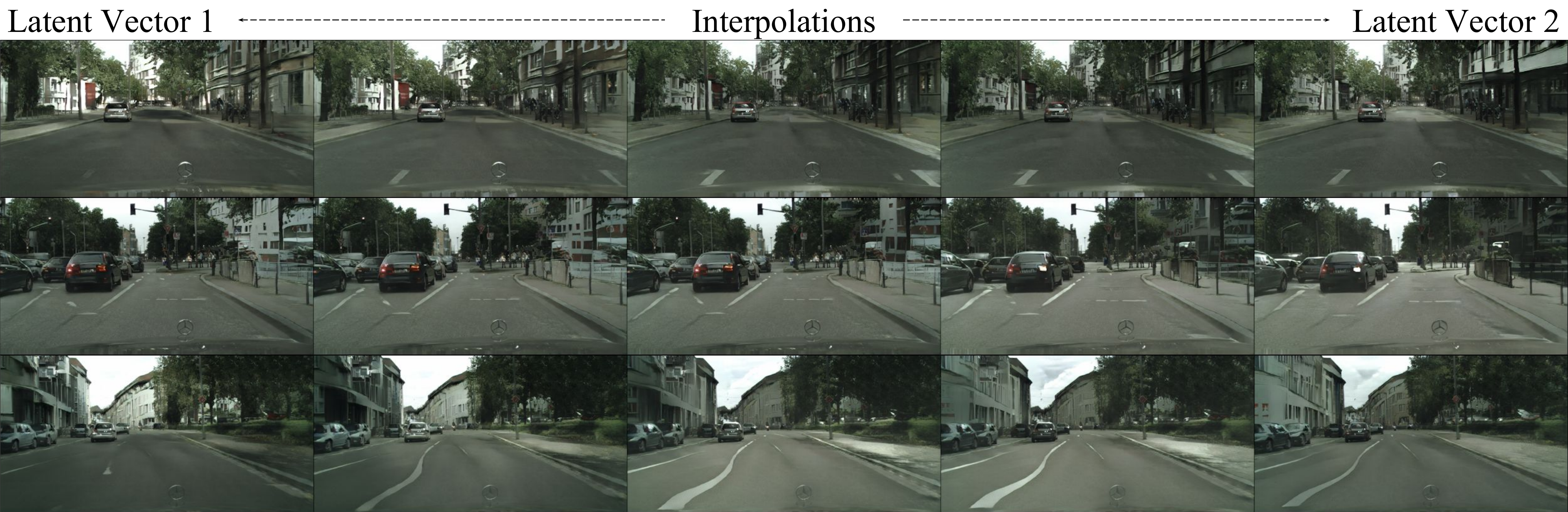}
    \vspace{-0.24in}
    \caption{Interpolation in latent space of \ours~on Cityscapes dataset. 
    }
    \label{fig:latent_interpolation}
\end{figure}

We also present the comparison of interpolation results between \ours~and BicycleGAN on {\textit{maps $\rightarrow$ images}} dataset.
As shown in Figure~\ref{fig:interpolation_comparison}, \ours~generates meaningful and diverse predictions on ambiguous regions (\eg~forest on a map) and has a smooth transition from one latent code to another.
On contrary, the BicyceGAN does not show meaningful changes within the interpolations and sometimes has a sudden changes on its output (\eg~last generated image). 
We also observe similar patterns across many examples in this dataset.
It shows an example that \ours~learns better latent space than BicycleGAN.

\begin{figure}[!h]
    \centering
    \includegraphics[width=0.993\textwidth]{./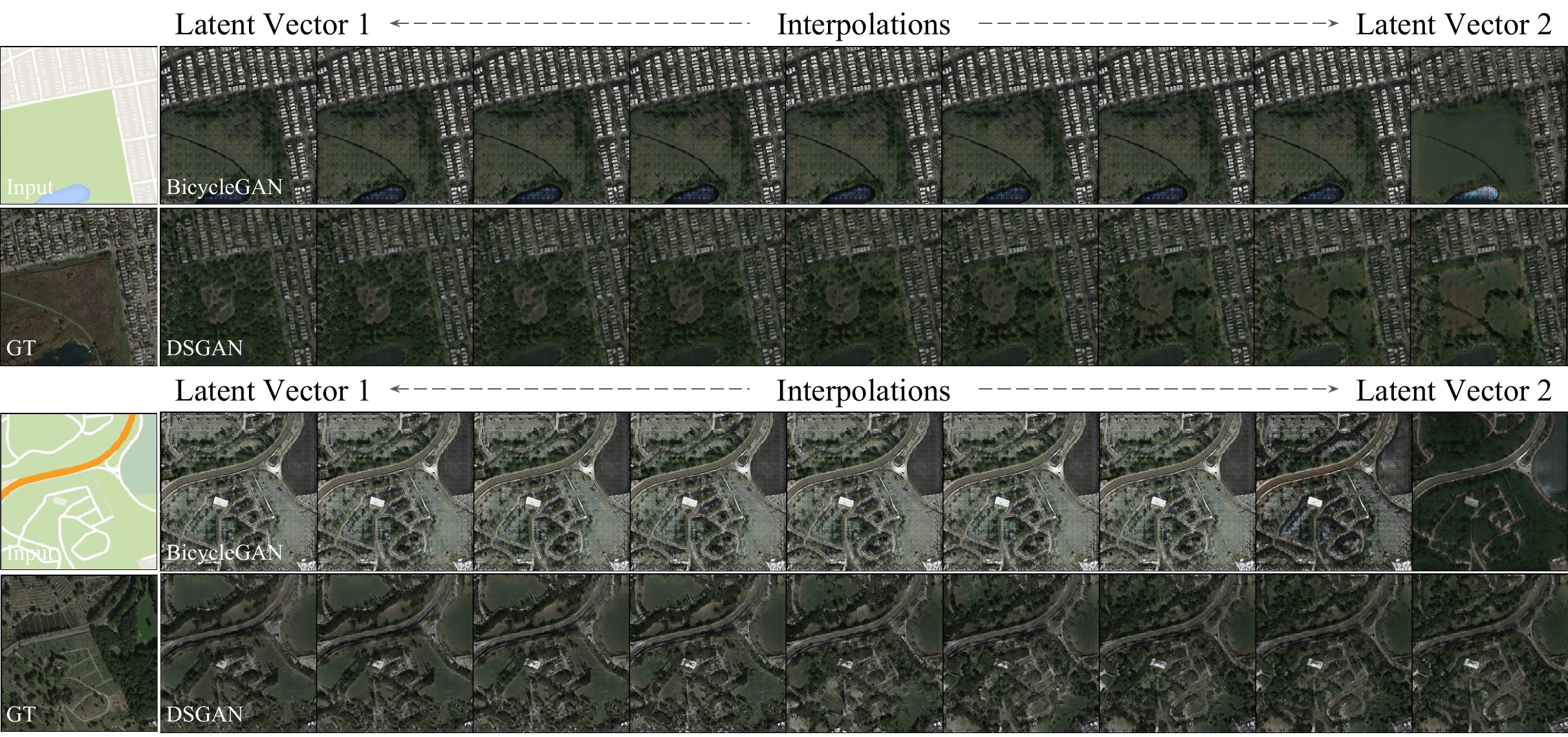}
    \vspace{-0.1in}
    \caption{Comparison of latent space interpolation results between \ours~and BicycleGAN. 
    }
    \label{fig:interpolation_comparison}
\end{figure}

\clearpage
\subsection{Image Inpainting}
\label{apx:inpainting}
In this section, we provide details of image inpainting experiment.

\paragraph{Network Architecture.}
We employ the generator and discriminator networks from \citet{localglobal} as baseline conditional GAN.
Our generator takes $256\times256$ image with the masked region $\vx$ as an input and produces $256\times256$ prediction of the missing region $\hat{\vy}$ as an output. 
Then we combine the predicted image with the input by $\vy=(1-M)\odot\vx + M\odot\hat{\vy}$ as an output of the network, where $M$ is a binary mask indicating the missing region.
Then the combined output $\vy$ is passed as an input to the discriminator.
We apply two modifications to the baseline model to achieve better generation quality.
First, compared to the original model that employs the Mean Squared Error (MSE) as a reconstruction loss $\mathcal{L}_{rec}(G)$ in Eq.~(\ref{eqn:practical_obj}), we apply the feature matching loss based on the discriminator~\citep{pix2pixHD}. 
Second, compared to the original model that employs two discriminators applied independently to the inpainted region and entire image, we employ only one discriminator on the inpainted region but using patchGAN-style discriminator~\citep{PatchGAN}.
Please note that these modifications are to achieve better image quality but irrelevant to our regularization.

\paragraph{Analysis on Regularization.}
First, we conduct qualitative analysis on how the proposed regularization controls a diversity of the generator outputs. 
To this end, we train the model (\ours\textsubscript{FM}) by varying the weights for our regularization, and present the results in Figure~\ref{fig:inpainting_lambda}.
As already observed in Section~\ref{sec:pix2pix}, imposing stronger constraints on the generator by our regularization indeed increases the diversity in the generator outputs.
With small weights (\eg~$\lambda=2$), we observe limited visual differences among samples, such as subtle changes in facial expressions or makeup. 
By increasing $\lambda$ (\eg~$\lambda=5$), we can see that more meaningful diversity emerges such as hair-style, age, and even identity while maintaining the visual quality and alignment to input condition. 
It shows more intuitively how our regularization can effectively help the model to discover more meaningful modes in the output space.
\begin{figure}[!h]
    \centering
    \includegraphics[width=1.0\textwidth]{./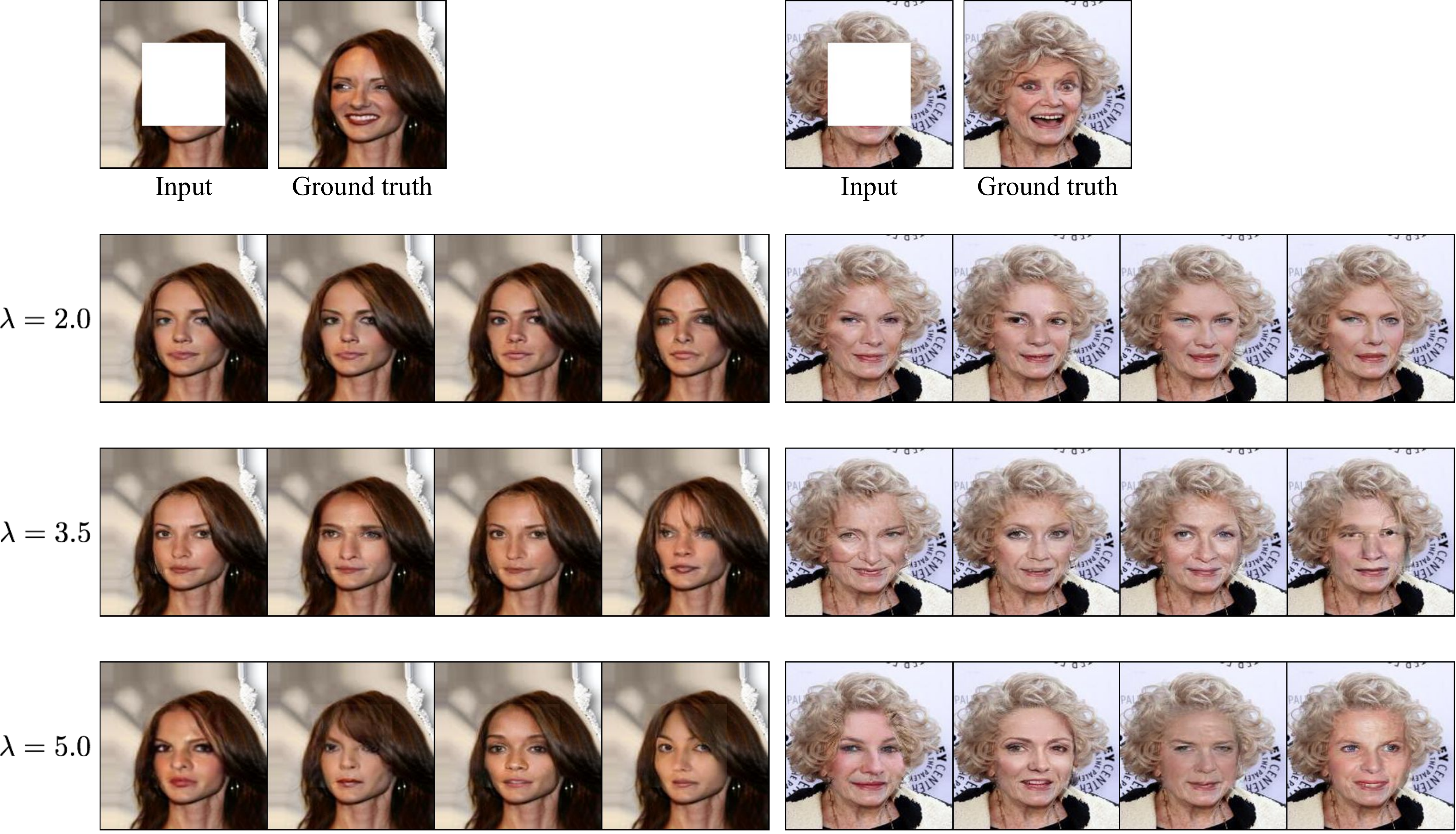}
    \caption{Image inpainting results with different $\lambda$. We observe more diversity emerges from the genrator outputs as we increase the weights for our regularization.}
    \label{fig:inpainting_lambda}
\end{figure}

\paragraph{Analysis on Latent Space.}
We further conduct a qualitative analysis on the learned latent space.
To verify that the model learns a continuous conditional distribution with our regularization, we conduct the interpolation experiment similar to the previous section.
Specifically, we sample two random latent codes from the prior distribution and generate images by linearly interpolating the latent code between two samples. 
Figure~\ref{fig:inpainting_interpolation} illustrates the results.
As it shows, the generator outputs exhibit a smooth transition between two samples, while most intermediate samples also look realistic. 

\begin{figure}[ht]
    \centering
    \includegraphics[width=1.0\textwidth]{./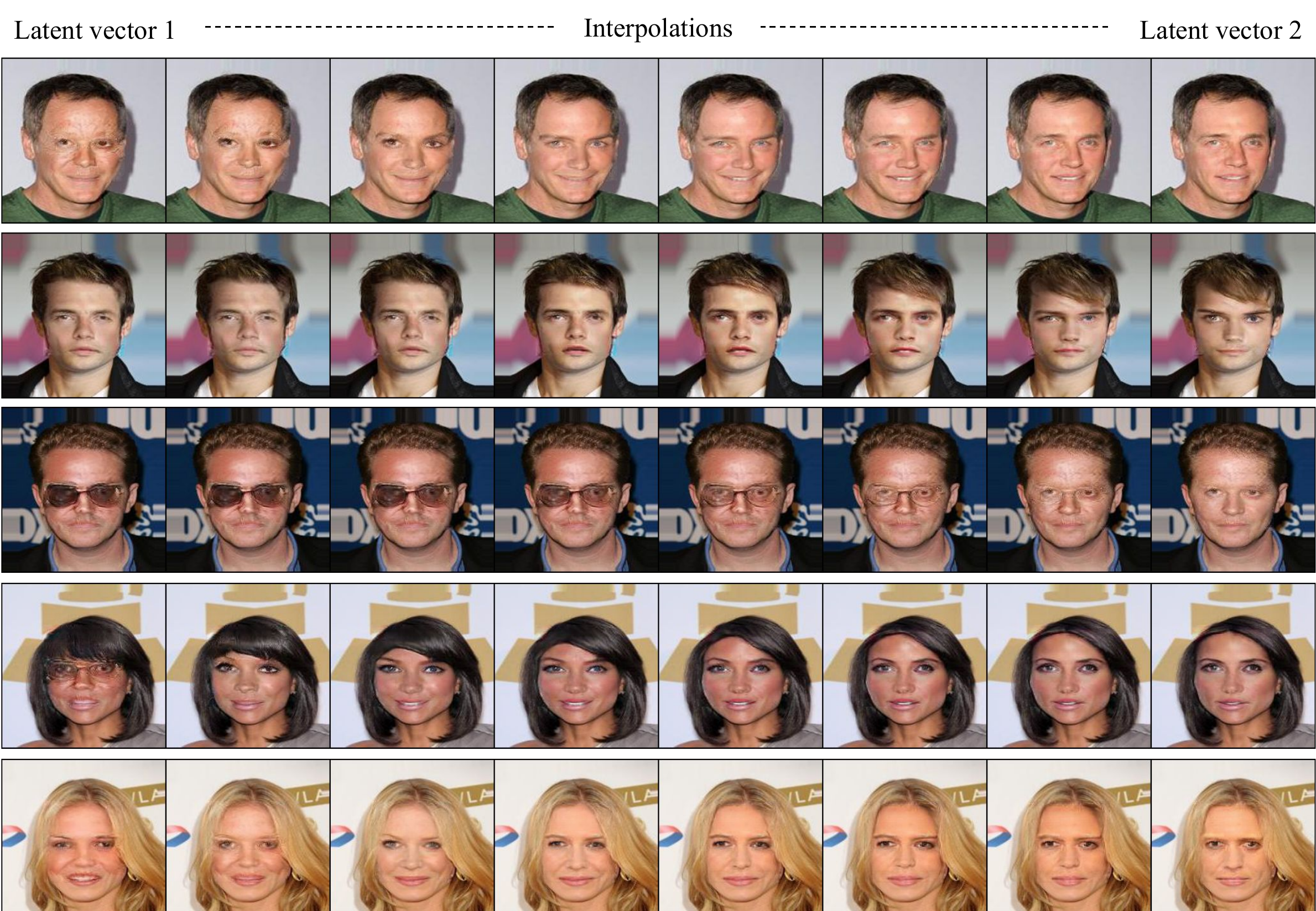}
    \caption{Interpolation results on image inpainting task. For each row, we sample the two latent codes (leftmost and rightmost images), and generate the images from the interpolated latent codes from one latent code to another. }
    \label{fig:inpainting_interpolation}
\end{figure}

\clearpage
\subsection{Video Prediction}
\label{apx:videopred}
In this section, we provide more details on network architecture, datasets and evaluation metrics on the video prediction task.

\subsubsection{Dataset}
\label{apx:videopred_dataset}

We measure the effectiveness of our method based on two real-world datasets: the BAIR action-free robot pushing dataset~\citep{robotArm} and the KTH human actions dataset~\citep{kth}. 
For both of the dataset, we provide two frames as the condition and train the model to predict 10 future frames ($k=2,~T=10$ in Eq.~(\ref{eqn:latent_regularization_video})). 
In testing time, we run each model to predict 28 frames ($k=2,~T=28$). 
Following \citet{savp}, we used $64\times64$ frames for both datasets.
The details of data pre-processing are described in below.

\paragraph{BAIR Action-Free~\citep{robotArm}.} 
This dataset contains randomly moving robot arms on a table with a static background. 
This dataset contains the diverse movement of a robot arm with a diverse set of objects. 
We downloaded the pre-processed data provided by the authors~\citep{savp} and used it directly for our experiment.

\paragraph{KTH~\citep{kth}.} 
Each video in this dataset contains a single person in a static background performing one of six activities: walking, jogging, running, boxing, hand waving, and hand clapping. 
We download the pre-processed videos from \citet{mcnet,svp_lp}, which contains the frames with reasonable motions.
Following~\citet{condVidGen}, we added a diversity to videos by randomly skipping frames in a range of [1,3].

\subsubsection{Network Architecture}
\label{apx:videopred_network}

We compare our method against SAVP~\citep{savp} which is proposed to achieve stochastic video prediction.
SAVP addresses a mode-collapse problem using the hybrid model of conditional GAN and VAE.
For a fair comparison, we construct our baseline cGAN by taking GAN component from SAVP (the generator and discriminator networks). 
In below, we provide more details of the generator and discriminator architectures used in our baseline cGAN model. 

The generator is based on the encoder-decoder network with convolutional LSTM~\citep{nowcasting}.
At each step, it takes a frame together with a latent code as inputs and produces the next frame as an output.
Contrary to the original SAVP that takes a latent code at each step to encode frame-wise stochasticity, we modified the generator to take one latent code per sequence that encodes the global dynamics of a video.
Then the discriminator takes the entire video as an input and produces a prediction on real or fake through 3D convolution operations.

\subsubsection{Evaluation Metrics}
\label{apx:videopred_evaluation_metrics}
We provide more details about the evaluation metrics used in our experiment.
For each test video, we generate 100 random samples with a length of 28 frames and evaluate the performance based on the following metrics:

\begin{itemize}
    \item \textit{Diversity}: 
    To measure the degree of diversity of the generated samples, we computed the frame-wise distance between each pair of the generated videos based on Mean Squared Error (MSE). 
    Then we reported the average distance over all pairs as a result.
    \item \textit{Dist\textsubscript{min}}: 
    Following \citet{savp}, we evaluate the quality of generations by measuring the distance of the closest sample among the all generated ones to the ground-truth.
    Specifically, for each test video, we computed the minimum distance between the generated samples and the ground-truth based on MSE and reported the average of the distances over the entire test videos.
    \item \textit{Sim\textsubscript{max}}: 
    As another measure for the generation quality, we compute the similarity of the closest sample to the ground-truth similar to \textit{Dist\textsubscript{min}} but using the cosine similarity of features extracted from VGGNet~\citep{vggNet}.
    We report the average of the computed similarity for entire test videos.
    
\end{itemize}

\subsubsection{More Examples}
\label{apx:videopred_examples}
We present more video prediction results on both BAIR and KTH datasets in Figure~\ref{fig:apx_video_comp}, which corresponds to Figure~\ref{fig:video_diff_comp} in the main paper.
As discussed in the main paper, the baseline cGAN produces realistic but deterministic outputs, whereas both SAVP and our method generate far more diverse future predictions.
SAVP fails to generate the diverse outputs in KTH datasets, mainly because the dataset contains many examples with small motions.
On the contrary, our method generates diverse outputs in both datasets, since our regularization directly penalizes the mode-collapsing behavior and force the model to discover various modes.
Interestingly, we found that our model sometimes generates actions different from the input video when the motion in input frames are ambiguous (\eg~\textit{hand-clapping} to \textit{hand-waving} in the highlighted example). 
It shows that our method can generate diverse and meaningful futures.

Figure~\ref{fig:apx_video_comp_robot_1} presents more detailed qualitative comparison in BAIR robot arm dataset.
Both baseline cGAN and SAVP often suffer from the noise predictions in the background, since they fail to predict the correct motion of foreground objects. 
On the other hand, our method can generate more clear outputs and sometimes even an interaction between foreground and background objects by predicting more meaningful dynamics of videos from latent code $\vz$.
See captions of Figure~\ref{fig:apx_video_comp_robot_1} for more detailed discussions.

\begin{figure}[p]
    \centering
    \includegraphics[width=.993\linewidth]{./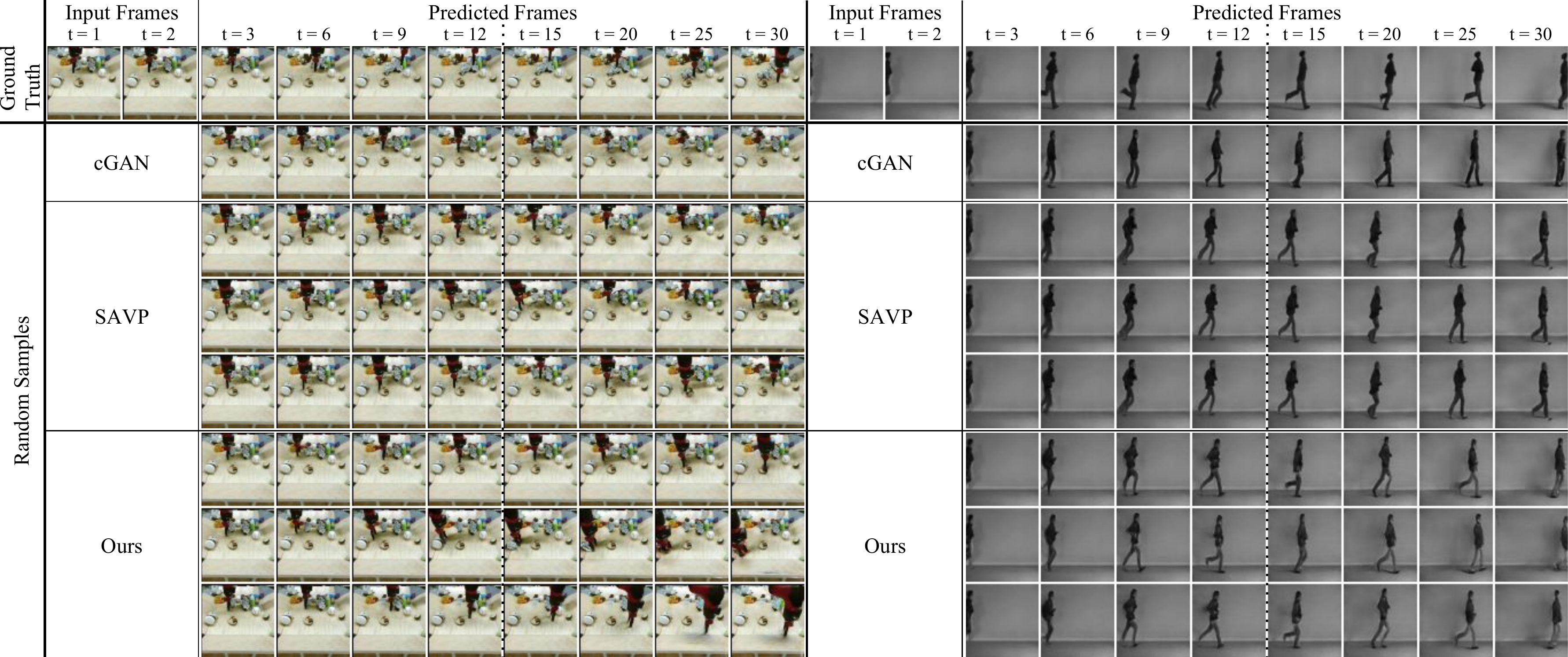}
    \\
    \vspace{15px}
    \includegraphics[width=.993\linewidth]{./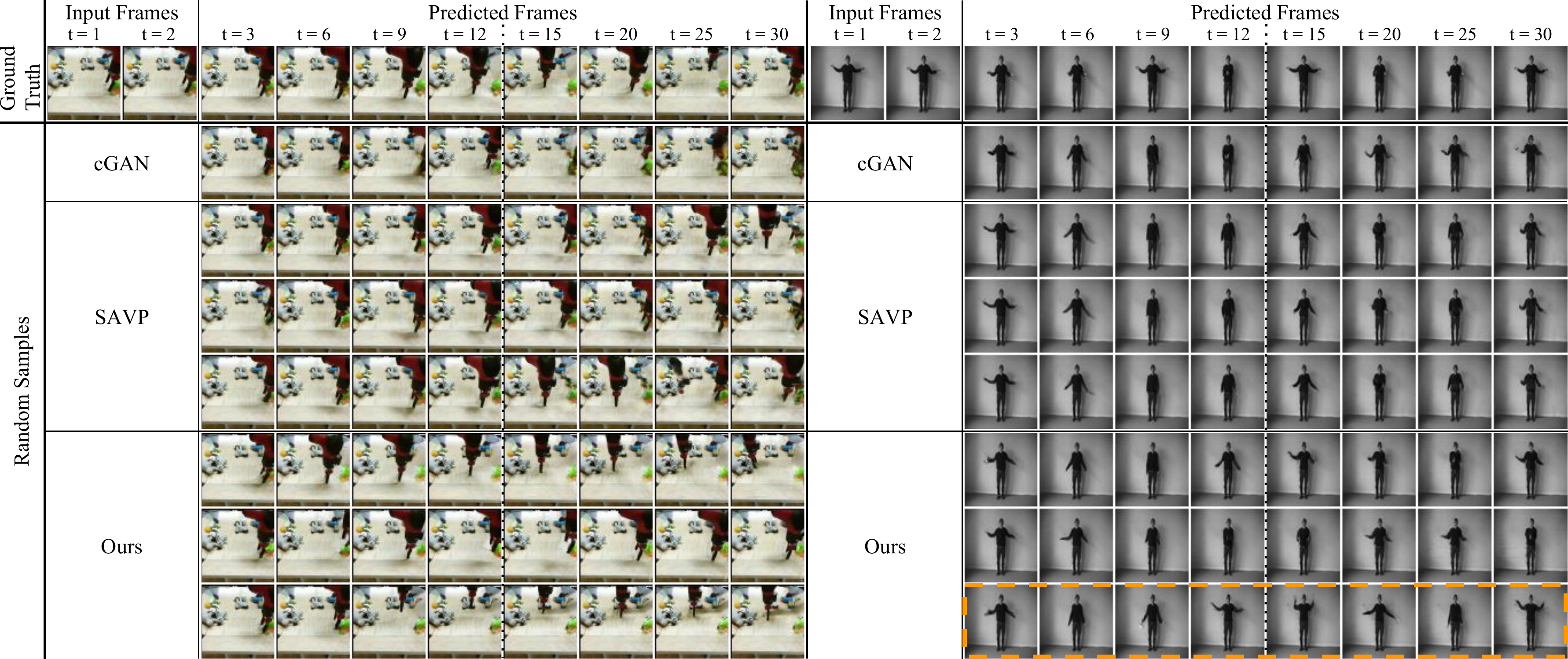}
    \\
    \vspace{15px}
    \includegraphics[width=.993\linewidth]{./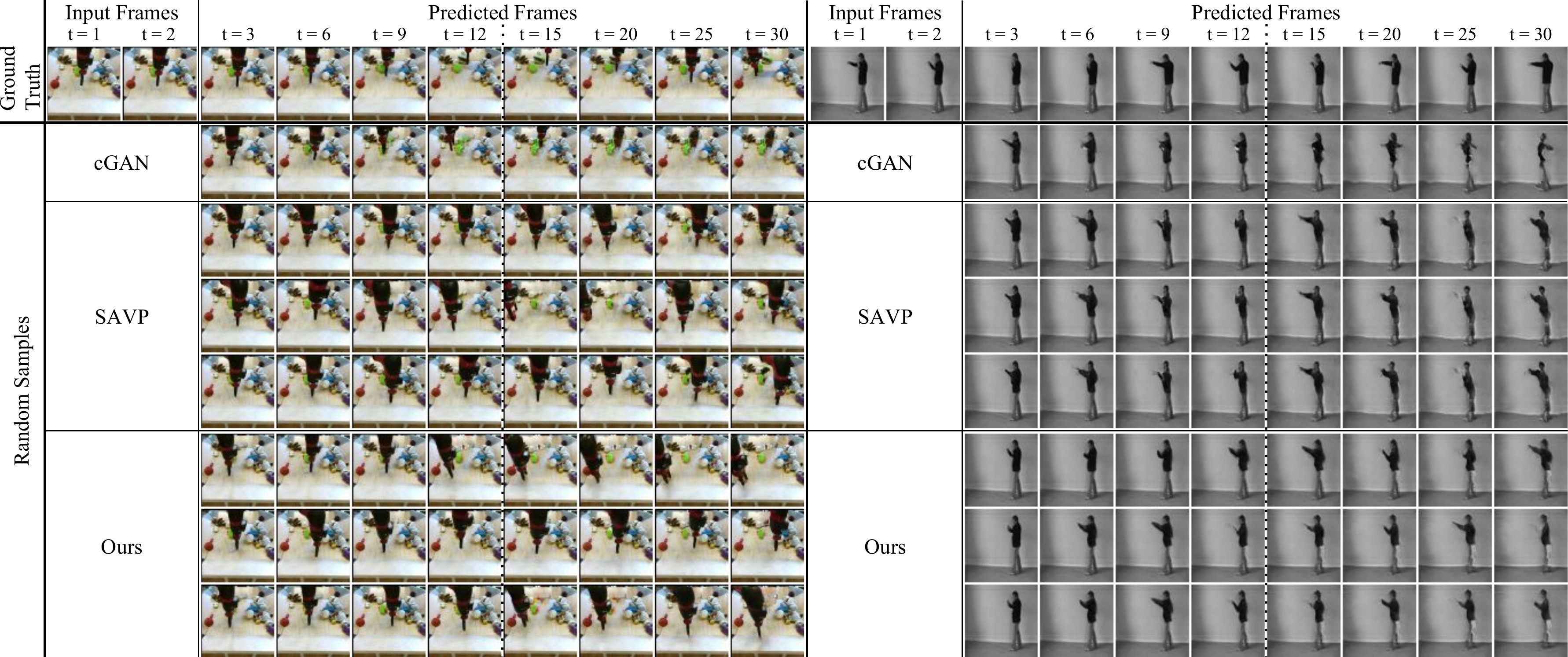}
    \caption{Stochastic video prediction results. In both datasets, our method presents diverse prediction, whereas SAVP generate less diverse result especially in the KTH dataset. Interestingly, as you can see from the dotted orange box, our model can explore not only the original condition (hand clapping) but also other cases (hand waving) if the context is not too strong. Please check our web page to see the videos: \href{https://sites.google.com/view/iclr19-dsgan/}{\color{magenta}https://sites.google.com/view/iclr19-dsgan/}}
    \label{fig:apx_video_comp}
\end{figure}

\begin{figure}[!h]
    \centering
    \includegraphics[width=.7\linewidth]{./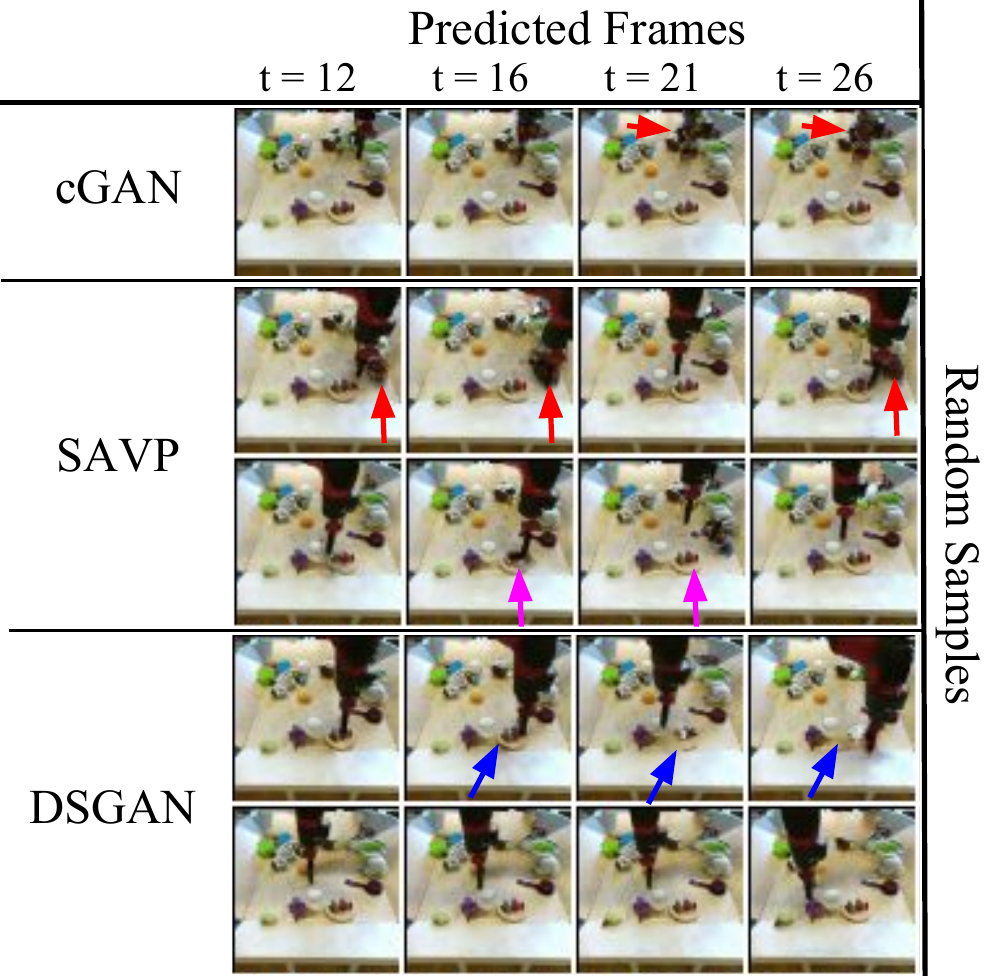}\\
    \caption{Qualitative comparison of various video prediction methods. 
    Both baseline cGAN and SAVP exhibit some noises in the predicted videos due to the failures in separating the moving foreground object from the background clutters ({\color{red}red} arrow). 
    Compared to this, our method tends to generate more clear predictions on both foreground and background.
    Interestingly, SAVP sometimes fail to predict interaction between objects ({\color{magenta} magenta} arrows). 
    For instance, the objects on a table stay in the same position even after pushed by the robot arm.
    On the other hand, our method is able to capture such interactions more precisely ({\color{blue} blue} arrows).
    }
    \label{fig:apx_video_comp_robot_1}
\end{figure}

\end{document}